%------------------------------------------------------------------------------
% Beginning of journal.tex
%------------------------------------------------------------------------------
%
% AMS-LaTeX version 2 sample file for journals, based on amsart.cls.
%
%        ***     DO NOT USE THIS FILE AS A STARTER.      ***
%        ***  USE THE JOURNAL-SPECIFIC *.TEMPLATE FILE.  ***
%
% Replace amsart by the documentclass for the target journal, e.g., tran-l.
%
\documentclass[11pt]{amsart}
%     If your article includes graphics, uncomment this command.
\usepackage{graphicx, amsmath, fullpage, amssymb, amsthm, amsfonts, color, algorithm}
\usepackage{algorithm,tikz}
\usepackage{enumerate}
\newtheorem{theorem}{Theorem}[section]
\newtheorem{lemma}[theorem]{Lemma}

\theoremstyle{definition}
\newtheorem{definition}[theorem]{Definition}

\usepackage[toc,page]{appendix} 

\theoremstyle{remark}

\numberwithin{equation}{section}
\usepackage[colorlinks,
            linkcolor=red,
            anchorcolor=red,
            citecolor=blue
            ]{hyperref}
 
%    Absolute value notation

%    Blank box placeholder for figures (to avoid requiring any
%    particular graphics capabilities for printing this document).

\newcommand{\R}{\mathbb R}
\newcommand{\C}{\mathbb C}

\newcommand{\x}{\mathbf x}
\newcommand{\y}{\mathbf y}
\newcommand{\z}{\mathbf z}
\newcommand{\vv}{\mathbf v}
\newcommand{\CC}{\mathcal C}
\newcommand{\expval}{\mathbb E}
\newcommand{\1}{\mathbf 1}
\newcommand{\frob}{\mathrm F}

\renewcommand{\epsilon}{\varepsilon}

\DeclareMathOperator{\argmax}{\arg\,\max}
\DeclareMathOperator{\prb}{\mathbb P}

\begin{document}

\title{Exact Recovery in the Hypergraph Stochastic Block Model: A Spectral Algorithm}
%    Information for first author
\author{Sam Cole}
%    Address of record for the research reported here
\address{Department of Mathematics, University of Manitoba, Winnipeg, MB R3T 2MB, Canada}
%    Current address
%\curraddr{Department of Mathematics and Statistics,
%Case Western Reserve University, Cleveland, Ohio 43403}
\email{samuel.cole@umanitoba.ca}
%    \thanks will become a 1st page footnote.

%    Information for second author
\author{Yizhe Zhu}
\address{Department of Mathematics, University of California, San Diego, La Jolla, CA 92093, USA}
%Australian National University, Canberra ACT 2601, Australia}
\email{yiz084@ucsd.edu}
\thanks{}

 \subjclass[2000]{Primary 54C40, 14E20; Secondary 46E25, 20C20}
\date{\today}
\keywords{random hypergraph, stochastic block model, community detection, spectral algorithm}
%\subjclass[2000]{Primary 54C40, 14E20; Secondary 46E25, 20C20}
%
\date{\today}
%
%
%\keywords{random tensor, hypergraph, spectral clustering, exact recovery}

\begin{abstract}
We consider the exact recovery problem in the hypergraph stochastic block model (HSBM) with $k$ blocks of equal size.  More precisely, we consider a random $d$-uniform hypergraph $H$ with $n$ vertices partitioned into $k$ clusters of size $s = n / k$.  Hyperedges $e$ are added independently with probability $p$ if $e$ is contained within a single cluster and $q$ otherwise, where $0 \leq q < p \leq 1$. We present a spectral algorithm which  recovers the clusters exactly with high probability, given mild conditions on $n, k, p, q$, and $d$.  Our algorithm is based on the \emph{adjacency matrix} of $H$, which is a symmetric $n \times n$ matrix whose $(u, v)$-th entry is the number of hyperedges containing both $u$ and $v$.  To the best of our knowledge, our algorithm is the first to guarantee exact recovery when the number of clusters $k=\Theta(\sqrt{n})$. 
\end{abstract}

\maketitle

%\tableofcontents
%\section*{This is an unnumbered first-level section head}
%This is an example of an unnumbered first-level heading.

%% The correct journal style for \specialsection is all uppercase; a known bug
%% in amsart.cls prevents this, so input must be uppercase until it is fixed.
%\specialsection*{This is a Special Section Head}
%\specialsection*{THIS IS A SPECIAL SECTION HEAD}
%This is an example of a special section head%
%%%%%%%%%%%%%%%%%%%%%%%%%%%%%%%%%%%%%%%%%%%%%%%%%%%%%%%%%%%%%%%%%%%%%%%%
%\footnote{Here is an example of a footnote. Notice that this footnote
%text is running on so that it can stand as an example of how a footnote
%with separate paragraphs should be written.
%\par
%And here is the beginning of the second paragraph.}%
%%%%%%%%%%%%%%%%%%%%%%%%%%%%%%%%%%%%%%%%%%%%%%%%%%%%%%%%%%%%%%%%%%%%%%%%

\section{Introduction}

\subsection{Hypergraph clustering}
Clustering is an important topic in data mining, network analysis, machine learning and computer vision. Many clustering methods are based on graphs, which represent  pairwise relationships among objects. However, in many real-world problems, pairwise relations are not sufficient, while  higher order relations between objects  cannot be represented as edges on graphs. Hypergraphs can be used to represent more complex relationships among data, and they have been shown empirically to have advantages over graphs; see \cite{zhou2007learning,gonzalez2007hypergraph}. Thus, it is of practical interest to develop algorithms based on hypergraphs that can handle higher-order relationships among data, and much work has already been done to that end; see, for example, \cite{zhou2007learning,leordeanu2012efficient,vazquez2009finding,ghoshdastidar2015spectral,bulo2009game,hein2013total,alistarh2015streaming}. Hypergraph clustering  has found a wide range of applications (\cite{han1998hypergraph,ducournau2012reductive,bretto2005hypergraph,gallagher2013clustering, kim2011higher}).  

The stochastic block model (SBM) is a generative model for random graphs with community structures which serves as a useful benchmark for the task of recovering community structure from graph data. It is natural to have an analogous model for random hypergraphs as a testing ground for hypergraph clustering algorithms. 

\subsection{Hypergraph stochastic block models}\label{sec:hsbmintro}
The \textit{hypergraph stochastic block model}, first introduced in \cite{ghoshdastidar2014consistency}, is a generalization of the SBM for  hypergraphs.  
We define the hypergraph stochastic block model (HSBM) as follows for $d$-uniform hypergraphs.
\begin{definition}[Hypergraph]
	A $d$-uniform hypergraph $H$ is a pair $H=(V,E)$ where $V$ is a set of vertices and $E\subset {V \choose d}$ is a set of subsets with size $d$ of $V$, called hyperedges.  when $d=2$, it is the same as an ordinary graph.
\end{definition}

\begin{definition}[Hypergraph stochastic block model (HSBM)]
Let $\mathcal C=\{C_1,\dots C_k\}$ be a partition of the set $[n]$ into $k$ sets of size $s=n/k$ (assume $n$ is divisible by $k$), each $C_i, 1\leq i\leq k$ is called a cluster. For constants $0\leq q<p<1$, we define the $d$-uniform hypergraph SBM as follows:

For any set of $d$ distinct vertices $i_1,\dots i_d$, generate a hyperedge $\{i_1,\dots i_d\}$ with probability $p$ if the vertices $i_1,\dots i_d$ are in the same cluster in $\mathcal C$. Otherwise, generate the hyperedge $\{i_1,\dots i_d\}$ with probability $q$.
We denote this distribution of random hypergraphs as $H(n,d,\mathcal C,p,q)$.	  When $d=2$, it is the same as the stochastic block models for random graphs.
\end{definition}

Hypergraphs are closely related to symmetric tensors. We give a definition of symmetric tensors below.  See, e.g., \cite{kolda2009tensor}, for more details on tensors.
\begin{definition}[Symmetric tensor]\label{def:tensor}
	Let $T\in\mathbb R^{n\times \cdots \times n}$ be an order-$d$ tensor. We call $T$ symmetric if $T_{i_1,i_2,\dots i_d}=T_{\sigma(i_1),\sigma({i_2})\dots, \sigma(i_d)}$ for any $i_1,\dots, i_d\in [n]$ and any permutation $\sigma$ in the symmetric group of order $d$.
\end{definition}

Formally, we can use a random symmetric tensor to represent a random hypergraph $H$ drawn from this model. We construct an \textit{adjacency tensor} $T$ of $H$ as follows. For any distinct vertices $i_1<i_2<\dots <i_d$ that are in the same cluster,
\begin{align*}
	T_{i_1,\dots,i_d}=\begin{cases}
		1 &  \text{ with probability } p,\\
		0 & \text{ with probability } 1-p.
	\end{cases}
\end{align*}
For any distinct vertices $i_1 < \dots < i_d$, if any two of them are not in the same cluster, we have 
\begin{align*}
	T_{i_1,\dots,i_d}=\begin{cases}
		1 &  \text{ with probability } q,\\
		0 & \text{ with probability } 1-q.
	\end{cases}
\end{align*}
We set $T_{i_1,\dots i_d}=0$ if any two of the indices in $\{i_1,\dots i_d\}$ coincide, and we set $T_{\sigma(i_1),\sigma({i_2})\dots, \sigma(i_d)} = T_{i_1,\dots i_d}$ for any permutation $\sigma$.  Furthermore, we may abuse notation and write $T_e$ in place of $T_{i_1, \ldots, i_d}$, where $e = \{i_1, \ldots, i_d\}$.

The HSBM recovery problem is to find the ground truth clusters $\mathcal C=\{C_1,\dots, C_k\}$ either approximately or exactly, given a sample hypergraph from $H(n,d,\mathcal C,p,q)$. We may ask the following questions about the quality of the solutions; see \cite{abbe2018community} for further details  in the graph case:
\begin{enumerate}
	\item \textbf{Exact recovery (strong consistency):} Find $\mathcal C$ exactly (up to a permutation) with probability $1-o(1)$.
	\item \textbf{Almost exact recovery (weak consistency):} Find a partition $\hat{\mathcal C}$ such that $o(1)$ portion of the vertices are mislabeled.
	\item \textbf{Detection:} Find a partition $\hat{\mathcal C}$ which is correlated with the true partition $\mathcal C$.
\end{enumerate}
We are typically interested in one of two regimes:
\begin{itemize}
\item	\textbf{The dense regime}.  In this regime $p$ and $q$ are constant, and the number of clusters $k$ is allowed to grow with $n$.  We then ask: how small can we make the cluster size $s = n/k$ while still being able to guarantee recovery?
\item	\textbf{The sparse regime}.  In this regime $k$ is constant, $s = \Theta(n)$, and $p, q = o(1)$.  We then ask: how small can we make $p$ and $q$ while still being able to guarantee recovery?
\end{itemize}

Several methods have been considered for exact recovery of HSBMs.  In \cite{ghoshdastidar2014consistency}, the authors used spectral clustering based on the hypergraph's Laplacian to recover HSBMs that are dense and uniform. Subsequently, they extended their results to sparse, non-uniform hypergraphs  in \cite{ghoshdastidar2015provable,ghoshdastidar2015spectral,ghoshdastidar2017consistency}. Spectral methods along with local refinements were considered in \cite{chien2018community,ahn2016community}. A semidefinite programing approach was analyzed  in \cite{kim2018stochastic}.

For different sparsity regimes, the efficient algorithms are not the same and there is no `universal' algorithm that works optimally for all different sparsity regimes and  different problems. For example, the detection problems of SBMs with bounded expected degrees are analyzed by  algorithms based on self-avoiding walks  \cite{massoulie2014community} or non-backtracking walks \cite{mossel2018proof,bordenave2018nonbacktracking}. However, for the exact recovery problems in the logarithmic degree regime, the algorithms that achieve the information theoretical threshold are based on semidefinite programming \cite{abbe2016exact} or spectral clustering based on eigenvectors of the adjacency matrices \cite{abbe2017entrywise}.  In this paper we will only focus on the exact recovery problem and  our algorithm might not work well for the almost exact recovery or detection problems.

\subsection{Our results}

In this paper, we present a spectral algorithm for exact recovery which compares well with previously known algorithms in the dense regime.  Our main result is the following:
\begin{theorem}\label{main} Let $p,q,d$ be constant. For sufficiently large $n$, there exists a deterministic, polynomial time algorithm which exactly recovers $d$-uniform HSBMs with probability $1 - \exp(-\Omega(\sqrt{n}))$ if $s= \Omega(\sqrt n)$.\end{theorem}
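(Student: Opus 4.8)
The plan is to analyse a two-phase, purely spectral procedure: build the adjacency matrix $A$, deflate the dominant low-rank part of $\expval A$, extract the top eigenspace of the deflated matrix, and round the rows of the resulting eigenvector matrix by $k$-means (equivalently, nearest-centroid). The reason this should work is that $\expval A$ has, away from the all-ones direction, a clean block structure: for $u\neq v$, $(\expval A)_{uv}=q\binom{n-2}{d-2}+(p-q)\binom{s-2}{d-2}\,\1[u\sim v]$, so on the $(k-1)$-dimensional span of the cluster indicators orthogonal to $\1$ it acts as a multiple of the identity with eigenvalue $\Theta\!\left((p-q)\binom{s-2}{d-2}s\right)=\Theta(s^{d-1})$, separated from the rest of the spectrum by a gap $\delta=\Theta(s^{d-1})$. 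When $s=\Omega(\sqrt n)$ this gap is $\Theta(n^{(d-1)/2})$, and the associated eigenvectors are exactly the normalized cluster indicators, whose rows take $k$ distinct values that are pairwise $\Theta(1/\sqrt s)$ apart. I would subtract an empirical estimate of $q\binom{n-2}{d-2}$ (and the exactly known $\1$-direction) to make the signal eigenvalues flat; the estimate is accurate to $o(\delta)$, so it does not disturb the relevant eigenspace.

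The main obstacle is a sharp spectral-norm bound $\|A-\expval A-D\|=O(\sqrt{n^{d-1}})$ holding with failure probability $\exp(-\Omega(\sqrt n))$, where $D$ collects the large but structureless diagonal of $A$ (whose entrywise fluctuations are too large to leave in). The leading constant here must be absorbed into the hidden constant of $s=\Omega(\sqrt n)$, since at $s=\Theta(\sqrt n)$ the noise level $\sqrt{n^{d-1}}$ and the gap $\delta$ are comparable. For $d=2$ this is the classical concentration of the largest singular value of a random symmetric matrix with independent bounded entries (with failure probability even $\exp(-\Omega(n))$). For $d\geq 3$ the entries of $A$ are no longer independent, which is where the work lies: decouple $A=\sum_{e\in E}A_e$ into a sum of independent matrices, one per hyperedge, each of rank $\binom d2$ and spectral norm $O(d)$, and bound $\|\sum_e(A_e-\expval A_e)\|$ via an $\varepsilon$-net on the sphere together with Bernstein's inequality for the quadratic form $\x^\top(A-\expval A)\x=\sum_e(T_e-\expval T_e)\!\left((\sum_{i\in e}x_i)^2-\sum_{i\in e}x_i^2\right)$, handling the "localized" test vectors (for which the Bernstein variance proxy is not small) separately in the style of Kahn--Szemer\'edi / Feige--Ofek. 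Plain matrix Bernstein is not enough: pushed to failure probability $\exp(-\Omega(\sqrt n))$ it only yields $O(n^{(2d-1)/4})$, which exceeds $\delta$.

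Given the spectral bound, I would upgrade operator-norm control to a row-wise ($\ell_{2,\infty}$) eigenvector perturbation bound for the deflated matrix, aiming for $\|\hat U O-U\|_{2,\infty}=o(1/\sqrt s)$ for an orthogonal $O$, with $U,\hat U$ the population and sample eigenvectors. The quantity to control is the row-wise noise $(A-\expval A)U$, whose $u$-th row is $\sum_{e\ni u}(T_e-\expval T_e)g_e$ for explicit $g_e\in\R^k$ with $\|g_e\|=O(1/\sqrt s)$; by Talagrand's convex-Lipschitz concentration, exploiting that the underlying linear map has operator norm $O(n^{(d-2)/2})$ rather than Frobenius norm $\Theta(n^{(2d-3)/4})$, this row has norm $O(n^{(2d-3)/4})$ with probability $1-\exp(-\Omega(\sqrt n))$, which survives a union bound over the $n$ rows. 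Dividing by $\delta=\Theta(n^{(d-1)/2})$ gives $\|\hat U O-U\|_{2,\infty}=O(n^{-1/4}/C^{d-1})$, which is $o(1/\sqrt s)=o(n^{-1/4})$ once the constant in $s\geq C\sqrt n$ is large enough in terms of $p,q,d$. Since the rows of $U$ are constant on clusters and $\Theta(1/\sqrt s)$ apart across clusters, every vertex is then strictly closer to its own centroid than to any other, so the rounding step returns $\mathcal C$ exactly; a union bound over the bad events (spectral norm, row-wise noise, and the deterministic separation inequality) keeps the total failure probability $\exp(-\Omega(\sqrt n))$. If one wishes to avoid $\ell_{2,\infty}$ machinery, the same estimates can be routed through a combinatorial clean-up — reassign each vertex to the cluster into which it sends the most entirely-internal hyperedges — which fixes the $O(\epsilon s)$ errors left by a cruder Davis--Kahan spectral step, with per-vertex failure probability again $\exp(-\Omega(\sqrt n))$.
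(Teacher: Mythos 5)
Your route is genuinely different from the paper's. The paper recovers clusters iteratively: from the dominant rank-$k$ projector $P_k(A)$ it forms candidate sets $W_v$ (the indices of the $s-1$ largest entries of column $v$, together with $v$), picks the $W_v$ maximizing $\|P_k(A)\1_{W_v}\|_2$, shows this $W$ nearly coincides with one cluster, upgrades $W$ to that cluster exactly by ranking vertices by the hyperedge count $N_{u,W}$, deletes the recovered cluster, and recurses, with a union bound over all $2^k$ cluster sub-hypergraphs to control the recursion. Crucially this entire scheme runs on \emph{operator-norm} perturbation of projectors (a Davis--Kahan--type lemma) plus elementary hyperedge counting. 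You instead propose a one-shot spectral embedding followed by $k$-means rounding, which requires the much stronger \emph{row-wise} ($\ell_{2,\infty}$) eigenvector control; this is the most demanding ingredient of your argument and is only sketched.

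On concentration you have over-engineered. The paper proves $\|A-\expval A\|_2\le 6d\sqrt{d\binom{n}{d-1}}$ with failure probability $e^{-n}$ via an $\varepsilon$-net plus \emph{scalar Hoeffding}: for a fixed unit vector $\x$, $\langle(A-\expval A)\x,\x\rangle=2\sum_e Y_e$ with $\{Y_e\}$ independent of range $|\sum_{i<j\in e}x_ix_j|$, and $\sum_e|\sum_{i<j\in e}x_ix_j|^2\le\frac14 d^2\binom{n}{d-2}$ uniformly over the sphere by Cauchy--Schwarz. Because Hoeffding is range-based rather than variance-based, there is no localized-vector obstruction and no need for Bernstein or Kahn--Szemer\'edi/Feige--Ofek in this dense regime; that machinery would be needed only for sparse $p,q\to 0$. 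You correctly observe that matrix Bernstein alone is too weak, but the fix is considerably simpler than what you propose.

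Two real gaps remain in your rounding phase. First, the $\ell_{2,\infty}$ bound $\|\hat U O - U\|_{2,\infty}=o(1/\sqrt s)$ rests on an asserted-but-unproved operator-norm estimate $O(n^{(d-2)/2})$ for the coefficient map appearing in the row-wise noise, and, more seriously, on a leave-one-out or resolvent-expansion argument (to pass from row-wise control of $(A-\expval A)U$ to row-wise control of $\hat U O - U$, including higher-order terms) that you do not carry out. Second, your fallback --- Davis--Kahan followed by reassigning each vertex to the cluster in which it has the most entirely-internal hyperedges --- is close in spirit to the paper's $N_{u,W}$ step, but a single pass is not obviously sufficient: operator-norm Davis--Kahan only controls the \emph{total} number of misclassified vertices, $O(\varepsilon^2 n)$, and with $k=\Theta(\sqrt n)$ this can exceed $s$, so entire blocks could be wrongly placed and the clean-up would then have no correct reference set to count against. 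The paper avoids this by recovering and deleting one cluster at a time (hence the $2^k$ union bound); a one-shot argument would need to show the Davis--Kahan errors are evenly spread across blocks, which you have not addressed.
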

See Theorem~\ref{mainthm} below for the precise statement. %\textcolor{blue}{Our Algorithm \ref{alg} has an exponentially high probability to succeed, even in the case $s=\Theta(\sqrt n)$. This is much stronger than the success probability proved for Algorithm \ref{alg:counting} even without the extra logarithmic factor for the cluster size.}
Our algorithm is based on the \emph{iterative projection} algorithm developed in~\cite{cole2017simple,cole2018recovering} for the graph case.  We apply this approach to the \emph{adjacency matrix} $A$ of the random hypergraph $H$ (see Definition \ref{def:adjmatrix}). The challenge  is that the adjacency matrix constructed from the adjacency tensor used in the algorithm does not have independent entries. In the process, we prove a non-asymptotic concentration result for the spectral norm of $A$, which may be of independent interest (Theorem~\ref{concentration}) for other random hypergraph problems.

\subsection{Why dense HSBMs?}

While sparse (H)SBMs typically have more applications in data science, the dense case is nonetheless of theoretical importance.  The SBM recovery problem is known alternately as the \emph{planted partition} problem and can be seen as a variant of the \emph{planted clique} probelem originally posed in~\cite{jerrum1992large}.  In the latter, one generates an Erd\H{o}s-R\'enyi random graph $G\left(n, \frac12\right)$ and adds edges deterministically to form a clique on an arbitrary subset of the vertices; the goal is then to determine exactly which vertices were members of the ``planted'' clique w.h.p.  If the planted clique is too small, then there is no way to distinguish it from a randomly occurring clique in $G(n, \frac12)$; thus, the central question is how big the clique must be in order to guarantee (efficient) recovery.

For both planted clique and planted partition, it is statistically impossible to recover the clique or partition w.h.p.\ if the size of the clique or parts of the partition is $O(\log n)$~\cite{chen2016statistical}.  On the other hand, the best known polynomial time algorithms in both problems require the size to be $\Omega(\sqrt n)$ in order to guarantee recovery w.h.p.~\cite{alon1998finding,ames2014guaranteed,chen2014improved,cole2017simple,oymak2011finding}, and there is evidence that this is best that can be done efficiently for exact recovery~\cite{feige2000finding,feldman2017statistical,jerrum1992large}.  If the size of the partition or clique is $s=\Omega(\sqrt {n\log n})$, a simple counting algorithm similar to the one we present in  Appendix \ref{sec:counting} would work \cite{kuvcera1995expected,chen2016statistical}.  However, when the cluster sizes are only required to be $\Theta(\sqrt n)$, the
problem, becomes more difficult because a simple counting argument no longer suffices. It's still a major open problem  in theoretical  computer science to find  polynomial algorithms which succeed w.h.p. in the regime where the size of the clique or the partition is $o(\sqrt n)$.
%The gap between $\log n$ and $\sqrt n$ 
%remains a major open problem in computer science, 
See Section 1.4.1 in \cite{chen2016statistical} for more discussion. 

 Thus, Theorem~\ref{main} is consistent with the state of the art for planted problems on graphs; moreover, our algorithm for HSBM recovery compares favorably with other known algorithms in the dense case with $k = \omega(1)$ clusters (see Section~\ref{sec:cmp}).  To the best of our knowledge, our algorithm is the first to guarantee exact recovery when all clusters are size $\Theta(\sqrt n)$.  In Section~\ref{sec:lowerbounds}, we include  a more thorough discussion of limitations of SBM recovery algorithms.

 As discussed at the end of Section \ref{sec:hsbmintro}, one should not expect one algorithm that works optimally for all sparsity regimes.
While we focus on the dense case, our algorithm can be adapted to the sparse case as well (See Appendix~\ref{sec:sparse}); however, it does not perform as well as previously known algorithms in~ \cite{kim2018stochastic,lin2017fundamental,chien2018minimax,chien2018community,ghoshdastidar2017consistency} in the sparse regime.  In fact, it is even outperformed by a simple hyperedge counting algorithm, which we present in Appendix~\ref{sec:counting}. The main obstacle is the lack of concentration for sparse hypergraphs: in our spectral algorithm (Algorithm \ref{alg}), to make sure the iterative procedure succeed at every step, one needs to take a  union bound over exponentially many events, which requires a concentration bound of the spectral norm with high enough probability, but sparse random matrices do not concentrate as well as dense random matrices. Optimizing our algorithm for sparse HSBMs is a possible direction for future work.

\subsection{Comparison with previous results}\label{sec:cmp}
 
We compare our spectral algorithm, as well as the simple counting algorithm presented in Appendix~\ref{sec:counting}, with previous exact recovery algorithms, with $p,q,d$ being constant. In \cite{ahn2016community,chien2018community} the regime where $k$ grows with $n$ is not explicitly discussed, so we only include $k=O(1)$ case.
\begin{table}[H]
 % title of Table
\centering 
\small  % used for centering table
\begin{tabular}{l l l l } % centered columns (5 columns)
\hline\hline %inserts double horizontal lines
Paper  & Number of clusters   &Algorithm type  \\ [1ex] % inserts table
%heading
\hline % inserts single horizontal line 
\cite{kim2018stochastic} & $O(1)$  & Semidefinite programming \\[1ex]
\cite{ahn2018hypergraph} & $O(1)$  & Spectral + local refinement \\[1ex]
\cite{chien2018community} & $O(1)$  & Spectral + local refinement \\[1ex]
\cite{ghoshdastidar2017consistency}, Corollary 5.1& $o(\log^{\frac{-1}{2d}}(n)n^{\frac{d-4}{2d}})$ &Spectral + $k$-means  \\[1ex] 
Our result (Algorithm~\ref{alg:counting}) & $O(\log^{\frac{-1}{2d-4}}(n) n^{0.5})$ & Simple counting \\[1ex]
Our result  (Algorithm~\ref{alg}) & $O(n^{0.5})$  &Spectral \\ % inserting body of the table
   % [1ex] adds vertical space
\hline %inserts single line
\end{tabular}
%\label{table:nonlin} % is used to refer this table in the text
\end{table}

%which is believed to be the barrier for exact recovery in the dense graph case with clusters of equal size (see \cite{chen2014improved,ames2014guaranteed,oymak2011finding}). 

\section{Spectral algorithm and main results}

Our main result is that Algorithm~\ref{alg} below recovers HSBMs with high probability, given certain conditions on $n, k, p, q,$ and $d$.  It is an adaptation of the \emph{iterated projection} algorithm for the graph case introduced by~\cite{cole2017simple,cole2018recovering}.  The algorithm can be broken down into three main parts:
\begin{enumerate}
\item	Construct an ``approximate cluster'' using spectral methods (Steps~\ref{step:adjmatrix}-\ref{step:approxcluster})
\item	Recover the cluster exactly from the approximate cluster by counting hyperedges (Steps~\ref{step:NvW}-\ref{step:exactrec})
\item	Delete the recovered cluster and recurse on the remaining vertices (Step~\ref{step:delandrec}).
\end{enumerate}

\begin{algorithm}[h]
\caption{}\label{alg}
Given $H = (V, E)$, $|V| = n$, number of clusters $k$, and cluster size $s = n / k$:
\begin{enumerate}
\item	Let $A$ be the adjacency matrix of $H$ (as defined in Section~\ref{sec:weightedgraphs}).\label{step:adjmatrix}
\item	Let $P_k(A) = (P_{uv})_{u, v \in V}$ be the dominant rank-$k$ projector of $A$ (as defined in Section~\ref{sec:projectors}).\label{step:proj}
\item	For each column $v$ of $P_k( A)$, let $P_{u_1,v} \geq \ldots \geq P_{u_{n - 1},v}$ be the entries other than $P_{vv}$ in non-increasing order.  Let $W_v := \{v, u_1, \ldots, u_{s - 1}\}$, i.e., the indices of the $s - 1$ greatest entries of column $v$ of $P_k( A)$, along with $v$ itself.\label{step:Wv}
%\item	For $v = 1, \ldots, n$, let $W_v := \{u \in V : p_{uv} \geq \frac1{2s}\}$. 
\item	Let $W = W_{v^*}$, where $v^* := \argmax_v ||P_k(A)\1_{W_v}||_2$, i.e.\ the column $v$ with maximum $||P_k(A)\1_{W_v}||_2$.  It will be shown that $W$ has small symmetric difference with some cluster $C_i$ with high probability (Section~\ref{sec:approxcluster}).\label{step:approxcluster}
\item	For all $v \in V$, let $N_{v, W}$ be the number of hyperedges $e$ such that $v \in e$ and $e \setminus \{v\} \subseteq W$, i.e., the number of hyperedges containing $v$ and $d - 1$ vertices from $W$.\label{step:NvW}
\item	Let $C$ be the $s$ vertices $v$ with highest $N_{v, W}$.  It will be shown that $C = C_i$ with high probability (Section~\ref{sec:exactrec}).\label{step:exactrec}
\item	Delete $C$ from $H$ and repeat on the remaining sub-hypergraph.  Stop when there are $< s$ vertices left.\label{step:delandrec}
\end{enumerate}
\end{algorithm}

\begin{theorem}\label{mainthm}
Let $H$ be sampled from $H(n, d, \CC, p, q)$, where $p$ and $q$ are constant, $\CC = \{C_1, \ldots, C_k\}$ and $|C_i| = s = n / k$ for $i = 1, \ldots, k$.  If $d=o(s)$ and 
\begin{equation}\label{eq:epsilonconds}
  \frac{6d\sqrt{d{n \choose d - 1}}}{{s - 2 \choose d - 2}(p - q)s - 12d\sqrt{d{n \choose d - 1}}}\leq \epsilon\leq \frac{p - q}{32d},
\end{equation}
then for sufficiently large $n$, Algorithm~\ref{alg} exactly recovers $\CC$ with probability $\geq 1 - 2^k \cdot \exp(-s) - nk \cdot \exp\left(-\varepsilon^2{s - 1 \choose d - 1}\right)$.
%\begin{equation}\label{eq:epsilondef}
%\varepsilon \geq  \frac{6d\sqrt{d{n \choose d - 1}}}{{s - 2 \choose d - 2}(p - q)s - 12d\sqrt{d{n \choose d - 1}}}.
%\end{equation}
\end{theorem}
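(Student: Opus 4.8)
The plan is to analyze the three phases of Algorithm~\ref{alg} in sequence, tracking the failure probability at each stage and combining by a union bound over the (at most $k$) recursive calls. The backbone is a concentration estimate: by Theorem~\ref{concentration}, the adjacency matrix $A$ of $H$ concentrates around its expectation $\expval A$ in spectral norm, with $\|A - \expval A\| \leq 6d\sqrt{d\binom{n}{d-1}}$ except with probability $\exp(-\Omega(n))$ or so; note $\expval A$ is a block matrix whose nonzero structure reflects $\CC$ and whose $k$-th eigenvalue gap is on the order of $\binom{s-2}{d-2}(p-q)s$. The Davis--Kahan theorem then shows that $P_k(A)$, the dominant rank-$k$ projector, is close in Frobenius norm to $P_k(\expval A)$, the projector onto the span of the cluster indicator vectors, with the error controlled by the ratio appearing on the left side of \eqref{eq:epsilonconds}. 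This is the quantity I would denote $\epsilon$ (up to constants), and the left inequality in \eqref{eq:epsilonconds} is precisely what makes the spectral perturbation small enough.

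From $\|P_k(A) - P_k(\expval A)\|_\frob \leq \epsilon$-type control I would carry out the \emph{approximate cluster} step exactly as in the graph case of \cite{cole2017simple,cole2018recovering}: since $P_k(\expval A)\1_{C_i} = \1_{C_i}$ for each cluster, for most columns $v \in C_i$ the vector $P_k(A) e_v$ is close to $\frac1s \1_{C_i}$, so its top $s$ entries essentially pick out $C_i$; the column maximizing $\|P_k(A)\1_{W_v}\|_2$ therefore yields a set $W$ with $|W \triangle C_i| = O(\epsilon^2 s)$ for some $i$. Here I would quote or re-derive the counting lemma that bounds the number of ``bad'' columns in terms of $\|P_k(A) - P_k(\expval A)\|_\frob^2$; this is where the upper bound $\epsilon \leq 1/12$ is used to guarantee $W$ is nontrivially aligned with a single cluster.

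For the \emph{exact recovery} step I would condition on $W$ having small symmetric difference with $C_i$ and analyze $N_{v,W}$, the number of hyperedges through $v$ with the other $d-1$ endpoints in $W$. For $v \in C_i$, $\expval N_{v,W}$ is roughly $p\binom{s-1}{d-1}$; for $v \notin C_i$ it is roughly $q\binom{s-1}{d-1}$; the gap is $(p-q)\binom{s-1}{d-1}$, and the perturbation from $W \neq C_i$ shifts each mean by at most $O(d \cdot |W\triangle C_i| \cdot \binom{s}{d-2}) = O(\epsilon d \binom{s-1}{d-1})$, which the condition $\epsilon \leq (p-q)/(32d)$ keeps below, say, one third of the gap. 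A Bernstein/Chernoff bound for $N_{v,W}$ — whose summands are not fully independent, but whose dependence is mild enough (each hyperedge appears in boundedly many of the counts relevant to a fixed $v$) to apply a bounded-difference or martingale concentration — gives deviation probability $\exp(-\Omega(\epsilon^2 \binom{s-1}{d-1}))$ per vertex. A union bound over $n$ vertices and $k$ rounds produces the $nk \cdot \exp(-\epsilon^2\binom{s-1}{d-1})$ term, while the spectral and approximate-cluster failures over $k$ rounds (the latter involving a union bound over the $2^k$-ish choices of which clusters remain, hence the $2^k \exp(-s)$ term) account for the rest. Finally I would note that once $C = C_i$ exactly, deleting it leaves an HSBM on $n - s$ vertices with $k-1$ clusters of the same size, so the induction goes through with the same parameters; the recursion bottoms out after $k$ steps.

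The main obstacle I expect is the concentration of $N_{v,W}$ and, upstream of it, of $\|A - \expval A\|$: the entries of $A$ are sums of $\binom{n-2}{d-2}$ hyperedge indicators and are therefore heavily correlated across rows and columns, so the spectral-norm bound cannot come from a black-box matrix Bernstein inequality on independent entries. This is exactly why Theorem~\ref{concentration} is proved separately, and the delicate point in the present proof is to invoke it with the right constants so that the left-hand side of \eqref{eq:epsilonconds} is a valid choice of $\epsilon$ — i.e., to check that the denominator $\binom{s-2}{d-2}(p-q)s - 12d\sqrt{d\binom{n}{d-1}}$ is positive, which is where the hypothesis $s = \Omega(\sqrt n)$ (equivalently $k = O(\sqrt n)$) enters and without which the whole scheme collapses.
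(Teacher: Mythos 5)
Your high-level architecture — concentration of $\|A-\expval A\|_2$, a Davis--Kahan-type bound on $\|P_k(A)-P_k(\expval A)\|$, an approximate cluster via the columns $W_v$, exact recovery by comparing $N_{v,W}$, and the $2^k + nk$ union bound to carry the induction through all iterations — matches the paper's proof structure quite closely, including the crucial observation that one must union bound over all $2^k$ possible subsets of surviving clusters because the sub-hypergraph in later rounds is itself random.

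However, there is a genuine gap in your exact-recovery step, and it is precisely the step the paper flags as the main obstacle. You propose to concentrate $N_{v,W}$ directly with a ``bounded-difference or martingale'' argument, diagnosing the difficulty as the hyperedge indicators in the sum being ``not fully independent.'' That diagnosis is off: conditioned on a \emph{fixed} $W$, the summands of $N_{v,W}=\sum_{e:v\in e,\,e\setminus\{v\}\subseteq W} T_e$ \emph{are} independent Bernoullis, so Hoeffding would apply immediately. The real problem, stated explicitly in Section~\ref{sec:exactrec} of the paper, is that $W$ is \emph{random} and chosen as a function of the same hyperedge indicators $T_e$ that appear in $N_{v,W}$; thus $N_{v,W}$ is not the sum of a fixed collection of random variables, and neither plain Chernoff nor a generic bounded-difference/martingale inequality resolves the adaptivity. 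Your proposal does not close this gap. The paper's solution is to split the analysis into two pieces: first, Lemma~\ref{lemma:hoeffding} concentrates $N_{u,C_i}$ for the \emph{fixed}, non-random ground-truth clusters $C_i$ (here Hoeffding is legitimate); second, Lemma~\ref{lemma:NuW} is a purely \emph{deterministic} counting argument showing that whenever $|W\triangle C_i|\le 6\varepsilon s$, the difference between $N_{j,W}$ and $N_{j,C_i}$ is at most roughly $15d\varepsilon\binom{s}{d-1}$ — bounding, hyperedge by hyperedge, how many hyperedges can touch $W\setminus C_i$ or $C_i\setminus W$. Combining the two yields the gap $(p-16d\varepsilon)\binom{s}{d-1} > (q+16d\varepsilon)\binom{s}{d-1}$ under $\varepsilon < (p-q)/(32d)$, with no stochastic statement ever made about the random set $W$. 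You would need to replace your ``mild dependence'' heuristic with this two-step decomposition (or an explicit union bound over all sets $W$ with $|W\triangle C_i|\le 6\varepsilon s$, which is a heavier alternative) for the proof to go through.

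Two smaller inaccuracies: you write $|W\triangle C_i|=O(\varepsilon^2 s)$, but Lemma~\ref{lemma:largenorm=>approxcluster} only gives $|W\cap C_i|\ge(1-6\varepsilon)s$, i.e.\ $|W\triangle C_i|=O(\varepsilon s)$; and you did not use the hypothesis $d=o(s)$, which is needed for the Hoeffding estimate in Lemma~\ref{lemma:hoeffding} (to absorb the $(d-1)$-shift in the binomial exponent) and for the binomial manipulations in Lemma~\ref{lemma:NuW}.
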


In the theorem above, the size $d$ of the hyperedges is allowed to grow with $n$.  The special case in which $d$ is constant follows easily:

\begin{theorem}
Let $H$ be sampled from $H(n, d, \CC, p, q)$, where $p$ and $q$, and $d$ are constant, $\CC = \{C_1, \ldots, C_k\}$ and $|C_i| = s = n / k$ for $i = 1, \ldots, k$.  If 
$$ s \geq \frac{c_0\sqrt{nd}}{(p - q)^{\frac2{d - 1}}},$$
then Algorithm~\ref{alg} recovers $\CC$ w.h.p., where $c_0$ is an absolute constant.	
\end{theorem}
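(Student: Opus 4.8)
The plan is to derive this theorem as a direct corollary of Theorem~\ref{mainthm} by showing that the hypothesis $s \geq c_0\sqrt{nd}\,(p-q)^{-2/(d-1)}$ implies both $d = o(s)$ and the existence of an $\epsilon$ satisfying the sandwich condition~\eqref{eq:epsilonconds}, and then checking that the stated failure probability is $o(1)$. First I would use the elementary bounds $(a/b)^b \leq \binom{a}{b} \leq (ae/b)^b$ to control the three binomial coefficients appearing in~\eqref{eq:epsilonconds}: bound $\binom{n}{d-1}$ from above by $(en/(d-1))^{d-1}$, and bound $\binom{s-2}{d-2}$ from below by $((s-2)/(d-2))^{d-2}$. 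This reduces the key quantity $6d\sqrt{d\binom{n}{d-1}}\big/\big(\binom{s-2}{d-2}(p-q)s - 12d\sqrt{d\binom{n}{d-1}}\big)$ to an expression in $n, s, d, p-q$ only.

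The heart of the argument is to show that under the stated lower bound on $s$, the denominator is positive and in fact $\binom{s-2}{d-2}(p-q)s$ dominates $12d\sqrt{d\binom{n}{d-1}}$ by at least a constant factor, so that the whole fraction is bounded by (say) $1/2$ times $6d\sqrt{d\binom{n}{d-1}}\big/\big(\tfrac12\binom{s-2}{d-2}(p-q)s\big)$, which one then shows is smaller than $\min\{1/12, (p-q)/(32d)\}$ for an appropriate choice of the absolute constant $c_0$. Concretely, I would note that $\binom{s-2}{d-2}(p-q)s \gtrsim (p-q) s^{d-1}/d^{d-2}$-ish (being careful with the $d$-dependence, since $d$ is constant here but we want clean powers), while $\sqrt{d\binom{n}{d-1}} \lesssim \sqrt{d}\,(en/(d-1))^{(d-1)/2}$, so the ratio behaves like $\mathrm{poly}(d) \cdot n^{(d-1)/2} / \big((p-q) s^{d-1}\big)$. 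Setting this $\lesssim (p-q)$ gives $s^{d-1} \gtrsim \mathrm{poly}(d)\, n^{(d-1)/2}/(p-q)^2$, i.e.\ $s \gtrsim \mathrm{poly}(d)^{1/(d-1)} \sqrt{n} \, (p-q)^{-2/(d-1)}$, which matches the hypothesis once the $d$-dependent constants are absorbed into a single $\sqrt{d}$ factor and an absolute constant $c_0$ (here one uses that $d$ is constant, so powers like $d^{1/(d-1)}$ are $O(1)$, but keeping an honest $\sqrt{d}$ gives the clean form stated). The condition $d = o(s)$ is then immediate since $s \gtrsim \sqrt{nd} \gg d$ for large $n$.

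Finally I would verify the probability bound: $2^k \exp(-s) + nk\exp(-\epsilon^2 \binom{s-1}{d-1}) = o(1)$. Since $k = n/s \leq \sqrt{n/d}/c_0 \cdot (p-q)^{2/(d-1)} = O(\sqrt n)$ and $s = \Omega(\sqrt n)$, we have $2^k \exp(-s) \leq \exp(k\ln 2 - s) = \exp(-\Omega(\sqrt n)) \to 0$; for the second term, $\binom{s-1}{d-1} \geq ((s-1)/(d-1))^{d-1}$ grows at least like $s^{d-1}/d^{d-1} = \Omega(n^{(d-1)/2})$, and $\epsilon$ can be taken to be a fixed positive constant (e.g.\ the upper endpoint $\min\{1/12,(p-q)/(32d)\}$ is a constant since $p,q,d$ are), so $nk\exp(-\epsilon^2\binom{s-1}{d-1}) \leq n^2 \exp(-\Omega(n^{(d-1)/2})) \to 0$.

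The main obstacle I anticipate is bookkeeping the $d$-dependence cleanly enough to land exactly on the factor $\sqrt{d}$ and an absolute constant $c_0$ in the statement: the binomial-coefficient estimates introduce several $d$-dependent factors (an $e^{d-1}$ from the upper bound on $\binom{n}{d-1}$, a factor from $\binom{s-2}{d-2}$ versus $\binom{s-1}{d-1}$, and the $d$'s already present in~\eqref{eq:epsilonconds}), and one must argue these all combine into something dominated by the claimed form. Since the theorem treats $d$ as constant, the cleanest route is to first observe that for constant $d$ every $d$-dependent quantity is $O(1)$, fold everything into $c_0$, and only afterward remark that tracking the dependence carefully yields the displayed $\sqrt{nd}$ scaling; this avoids getting bogged down in optimizing constants while still justifying the stated bound.
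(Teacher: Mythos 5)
Your proposal is correct and takes the same route the paper intends: the paper offers no proof for this corollary beyond the one-line remark that it follows from Theorem~\ref{mainthm} by applying the binomial-coefficient bounds $(a/b)^b \leq \binom{a}{b} \leq (ae/b)^b$, and your write-up fleshes out exactly that reduction, including the needed checks that the denominator in~\eqref{eq:epsilonconds} stays positive, that $d = o(s)$, and that the failure probability in Theorem~\ref{mainthm} is $o(1)$ under the stated hypothesis.
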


\begin{proof}
Observe that if
\begin{equation}\label{eq:strongercond}
\frac{18d\sqrt{d{n \choose d - 1}}}{{s - 2 \choose d - 2}(p - q)s} \leq \frac{p - q}{32d},
\end{equation}
then we have
\[\frac{6d\sqrt{d{n \choose d - 1}}}{{s - 2 \choose d - 2}(p - q)s - 12d\sqrt{d{n \choose d - 1}}} \leq \frac{18d\sqrt{d{n \choose d - 1}}}{{s - 2 \choose d - 2}(p - q)s} \leq \frac{p - q}{32d}.\]
Hence, if~\eqref{eq:strongercond} is satisfied, then it is possible to choose $\epsilon$ satisfying~\eqref{eq:epsilonconds}, so Theorem~\ref{mainthm} guarantees that we can recover $\CC$ w.h.p. in this case.  Recall that for nonnegative integers $a \geq b$ we can bound the binomial coefficient ${a \choose b}$ by $\left(\frac ab\right)^b \leq {a \choose b} \leq \left(\frac{ae}b\right)^b$.  The conclusion follows by applying these bounds in~\eqref{eq:strongercond} and solving for $s$. Note that we want the failure probability in Theorem~\ref{mainthm} to be $o(1)$, so we require \[\exp\left(-\epsilon^2{s - 1 \choose d - 1}\right) = o((nk)^{-1}).\]  It is easy to verify that this is satisfied if $d$ is constant.
\end{proof}

In Appendix~\ref{sec:counting} we present a trivial hyperedge counting algorithm.  Algorithm~\ref{alg} beats this algorithm by a factor of $(\log n)^{\frac1{2d - 4}}$.   See Section~\ref{sec:cmp} for comparison with other known algorithms.

The remainder of this paper is devoted to proving Theorem~\ref{mainthm}.  Sections~\ref{sec:weightedgraphs}-\ref{sec:projectors} introduce the linear algebra tools necessary for the proof; Section~\ref{sec:approxcluster} shows that Step~\ref{step:approxcluster} with high probability produces a  set with small symmetric difference with one of the clusters; Section~\ref{sec:exactrec} proves that Step~\ref{step:exactrec} with high probability recovers one of the clusters exactly; and Section~\ref{sec:delandrec} proves inductively that the algorithm with high probability recovers all clusters.  

\subsection{Running time}

In contrast to the graph case, in which the most expensive step is constructing the projection operator $P_k(A)$ (which can be done in $O(n^2k)$ time via truncated SVD~\cite{golub1996matrix,gu2015subspace}), for $d \geq 3$ the running time of Algorithm~\ref{alg} is dominated by constructing the adjacency matrix $A$, which takes $O(n^d)$ time (the same amount of time it takes to simply read the input hypergraph).  Thus, the overall running time of Algorithm~\ref{alg} is $O(kn^d)$.

\section{Reduction to random matrices}\label{sec:weightedgraphs}
Since we do not have  many linear algebra and probability tools for random tensors, it would be convenient if we could work with matrices instead of tensors.
 We propose to analyze the  following adjacency matrix of a hypergraph, originally defined in \cite{feng1996spectra}. 
 
\begin{definition}[Adjacency matrix]\label{def:adjmatrix}
Let $H$ be a random hypergraph generated from $H(n,d,\mathcal C,p,q)$ and let $T$ be the adjacency tensor of $H$. For any hyperedge $e=\{i_1,\dots, i_d\}$, let $T_e$ be the entry in $T$ corresponding to $T_{i_1,\dots, i_d}$. 
We define the adjacency matrix $A$ of $H$ by
\begin{align}\label{adj}
A_{ij}:=\sum_{e:\{i,j\}\in e} T_e.
\end{align}
Thus, $A_{ij}$ is the number of hyperedges in $H$ that contains vertices $i,j$. Note that in the summation \eqref{adj}, each hyperedge is counted once.
\end{definition}
From our definition, $A$ is symmetric, and $A_{ii}=0$ for $1\leq i\leq n$. However, the entries in $A$ are not independent.  This presents some difficulty, but we can still get information about the clusters from this adjacency matrix $A$.

\section{Eigenvalues  and concentration of spectral norms}\label{sec:concentration}
It is easy to see that for $d\geq 2$,
\begin{align*}
\mathbb EA_{ij}=	
\left\{
\begin{aligned}
	&{{s-2}\choose{d-2}}(p-q)+{{n-2}\choose{d-2}}q ,& \text{if $i\not=j$ are in the same cluster},\\
	&{{n-2}\choose{d-2}}q, &\text{if $i,j$ are in different clusters.}
	\end{aligned}
\right.\end{align*}

Let $$ \tilde{A}:=\mathbb EA+\left({{s-2}\choose{d-2}}(p-q)+{{n-2}\choose{d-2}}q\right)I,$$ then $\tilde{A}$ is a symmetric matrix of rank $k$. The eigenvalues for $\tilde{A}$ are easy to compute, hence by a shifting, we have the following eigenvalues for $\mathbb EA$.  Note that we are using the convention $\lambda_1(X) \geq \ldots \geq \lambda_n(X)$ for a $n \times n$ self-adjoint matrix $X$.
%\begin{lemma}
%The eigenvalues of $\tilde{A}$ are
%\begin{align*}\lambda_1(\tilde{A}) &={{s-2}\choose{d-2}}(p-q)s+{{n-2}\choose{d-2}} q n,\\
%\lambda_i(\tilde{A}) &={{s-2}\choose{d-2}}(p-q)s,\quad  2\leq i\leq k,\\
%\lambda_i(\tilde{A}) &=0,\quad  k+1\leq i\leq n.
%\end{align*}
%\end{lemma}

\begin{lemma}\label{lemma:EAeigs}
The eigenvalues of ~$\mathbb EA$ are	
\begin{align*}\lambda_1(\mathbb EA)&={{s-2}\choose{d-2}}(p-q)(s-1)+{{n-2}\choose{d-2}}q(n-1),\\
\lambda_i(\mathbb EA)&={{s-2}\choose{d-2}}(p-q)(s-1)-{{n-2}\choose{d-2}}q, \quad 2\leq i\leq k,\\
\lambda_i(\mathbb EA)&=-{{s-2}\choose{d-2}}(p-q)-{{n-2}\choose{d-2}}q, \quad k+1\leq i\leq n.
\end{align*}
\end{lemma}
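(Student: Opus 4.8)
The plan is to read off the eigenvalues from the block structure of $\mathbb E A$. Write $a := {s-2\choose d-2}(p-q) + {n-2\choose d-2}q$ for the within-cluster entries and $b := {n-2\choose d-2}q$ for the between-cluster entries, so that, after relabeling the vertices so each cluster occupies a consecutive block of coordinates, $\mathbb E A = (a-b)(I_k\otimes J_s) + b J_n - a I_n$, where $J_m$ denotes the $m\times m$ all-ones matrix. Adding $a I_n$ exactly fills in the zero diagonal, so $\tilde A := \mathbb E A + a I_n = (a-b)(I_k\otimes J_s) + b J_n$, and this matrix is supported on the $k$-dimensional subspace $U$ spanned by the cluster indicator vectors $\1_{C_1},\dots,\1_{C_k}$; in particular $\mathrm{rank}(\tilde A)\le k$, so $0$ is an eigenvalue of $\tilde A$ of multiplicity at least $n-k$, equivalently $-a$ is an eigenvalue of $\mathbb E A$ of multiplicity at least $n-k$.

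Next I would diagonalize $\tilde A$ restricted to $U$. A direct entrywise computation gives $\tilde A\,\1_{C_j} = (a-b)s\,\1_{C_j} + bs\,\1_n$, and since $\1_n = \sum_{\ell=1}^{k}\1_{C_\ell}$, in the (orthogonal, equal-norm, hence after scaling orthonormal) basis $\{s^{-1/2}\1_{C_j}\}_{j=1}^{k}$ of $U$ the restriction of $\tilde A$ is represented by the $k\times k$ matrix $(a-b)s\,I_k + bs\,J_k$. Its eigenvalues are $(a-b)s + bsk$ (eigenvector $\1_n$) and $(a-b)s$ with multiplicity $k-1$ (eigenvectors in $U$ orthogonal to $\1_n$), which in particular forces $\mathrm{rank}(\tilde A)=k$ and pins the multiplicity of $0$ at exactly $n-k$. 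Thus $\mathbb E A = \tilde A - a I_n$ has eigenvalues $(a-b)s + bsk - a$ (once), $(a-b)s - a$ (multiplicity $k-1$), and $-a$ (multiplicity $n-k$). Substituting the values of $a$ and $b$, using $bsk = bn = {n-2\choose d-2}qn$ together with ${n-2\choose d-2}q(s-1) + {n-2\choose d-2}q(n-s) = {n-2\choose d-2}q(n-1)$, and rewriting $(a-b)s - a = (a-b)(s-1) - b$, reproduces the three displayed expressions verbatim.

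The last point is the ordering $\lambda_1 \ge \lambda_2 = \cdots = \lambda_k \ge \lambda_{k+1} = \cdots = \lambda_n$, which follows from $0\le q < p$: it reduces to ${n-2\choose d-2}qn \ge 0$ for the first inequality and to ${s-2\choose d-2}(p-q)s \ge 0$ for the second. I do not expect a genuine obstacle here — the argument is elementary linear algebra — so the only thing to be careful about is the bookkeeping: tracking the multiplicities correctly through the rank argument and the scalar shift by $a$, and pushing the binomial-coefficient identities through so that the simplified eigenvalues match the stated form exactly.
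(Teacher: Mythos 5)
Your proof is correct and follows essentially the same route the paper sketches: shift $\mathbb{E}A$ by $aI_n$ to get the rank-$k$ matrix $\tilde A = (a-b)(I_k\otimes J_s) + bJ_n$, diagonalize $\tilde A$ via the cluster-indicator subspace, and shift back. The paper states this approach in one line and leaves the bookkeeping to the reader; your write-up just carries it out explicitly, and the arithmetic checks out.
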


%
%We have the following F-K type concentration of eigenvalues of $A$ from Lemma 5 in \cite{lu2012loose}.
%
%\begin{theorem}\label{moment}
%Let $A$ be the adjacency matrix of the projected weighted random graph $G$. Let $C=A-\mathbb E A$. For $k\ll (n^{d-1}p(1-p))^{1/4}$, we have
%$$\mathbb E[\textnormal{tr}(C^{2k})]\leq (1+o(1))\frac{n^{1+k(d-1)}(d-1)^k}{(k+1)((d-2)!)^k}{{2k}\choose{k}}p^k(1-p)^k.
%$$	
%\end{theorem}
%
%Using Markov inequality with Theorem \ref{moment}, we can have a spectral norm bound, but the tail probability is not exponential. 
We can use an $\varepsilon$-net chaining argument to prove a concentration inequality for the spectral norm of $A-\mathbb EA$.

\begin{definition}[$\varepsilon$-net]
An \textit{$\varepsilon$-net} for a compact metric space $(\mathcal X, d)$ is a finite subset $\mathcal N$ of $\mathcal X$ such that for each point $x\in\mathcal X$, there is a point $y\in\mathcal N$ with $d(x,y)\leq \varepsilon$.
\end{definition}

\begin{theorem}\label{concentration}
Let $\|\cdot \|_2$ be the spectral norm of a matrix, we have
\begin{equation}\label{eq:concentration}
\|A-\mathbb EA\|_2\leq 6d\sqrt{d{{n}\choose{d-1}}}
\end{equation}
with probability at least $1-e^{-n}$.
\end{theorem}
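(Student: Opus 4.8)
The plan is to bound $\|A - \expval A\|_2$ by an $\epsilon$-net argument on the sphere $S^{n-1}$, combined with a careful decomposition of the quadratic form $\x^\top (A - \expval A) \x$ into sums of independent (or conditionally independent) random variables indexed by hyperedges. The key observation is that $A = \sum_{e \in \binom{[n]}{d}} T_e M_e$, where $M_e$ is the $0/1$ matrix with a $1$ in position $(i,j)$ exactly when $\{i,j\} \subseteq e$ — equivalently $M_e = \1_e \1_e^\top - \mathrm{diag}(\1_e)$ where $\1_e$ is the indicator vector of the $d$-set $e$. Hence $A - \expval A = \sum_e (T_e - \expval T_e) M_e$ is a sum of independent, mean-zero, matrix-valued random variables, and for a unit vector $\x$ we get $\x^\top(A - \expval A)\x = \sum_e (T_e - \expval T_e)\big[(\1_e^\top \x)^2 - \sum_{i \in e} x_i^2\big]$.

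First I would fix $\x \in S^{n-1}$ and apply a Bernstein/Hoeffding-type concentration inequality to this sum of independent bounded terms. The coefficient of $(T_e - \expval T_e)$ is bounded in absolute value by $(\1_e^\top \x)^2 + \sum_{i\in e} x_i^2 \le 2d \cdot \max_{i \in e} x_i^2$ crudely, but more usefully one controls $\sum_e \big[(\1_e^\top\x)^2 - \sum_{i\in e}x_i^2\big]^2$ and $\sum_e \big|(\1_e^\top\x)^2 - \sum_{i\in e}x_i^2\big|$ by expanding and using $\sum_i x_i^2 = 1$ together with counting arguments on how many $d$-sets contain a given pair or a given element; these sums should come out to be $O\big(d^2 \binom{n}{d-1}\big)$-ish after accounting for the worst case over $\x$. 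This yields a tail bound of the form $\prb\big[|\x^\top(A-\expval A)\x| \ge t\big] \le 2\exp(-c t^2 / (d^2 \binom{n}{d-1}))$ for $t$ in the relevant range, for each fixed $\x$. Then I would take a $(1/4)$-net $\mathcal{N}$ of $S^{n-1}$ of size $\le 9^n$, union bound over $\mathcal N$, and use the standard fact that $\|M\|_2 \le 2 \max_{\x \in \mathcal N} |\x^\top M \x|$ for symmetric $M$ to transfer the bound to the spectral norm. Choosing $t \asymp d\sqrt{d\binom{n}{d-1}}$ makes the exponent dominate the $n \log 9$ entropy term, giving failure probability $\le e^{-n}$, and tracking the constants should land at the stated $6d\sqrt{d\binom{n}{d-1}}$.

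The main obstacle is the lack of independence among the \emph{entries} of $A$ — two entries $A_{ij}$ and $A_{i\ell}$ share any hyperedge containing $\{i,j,\ell\}$ — which rules out entrywise matrix concentration results like matrix Bernstein applied naively to $\sum_{i<j} A_{ij}(e_ie_j^\top + e_je_i^\top)$. The decomposition over hyperedges $e$ rather than pairs $(i,j)$ is precisely what restores independence, so the real work is (i) getting sharp enough bounds on the variance proxy $\sum_e \big[(\1_e^\top\x)^2 - \sum_{i\in e}x_i^2\big]^2$ uniformly over the sphere, which requires the combinatorial estimate that a pair of coordinates lies in at most $\binom{n-2}{d-2}$ hyperedges and a single coordinate in at most $\binom{n-1}{d-1}$, and (ii) making sure the resulting bound is strong enough to beat the $e^{n}$-sized net — this is why the $\sqrt{\binom{n}{d-1}}$ scaling (not $\binom{n}{d-2}$) is the right target. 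A secondary technical point is handling the diagonal correction $-\sum_{i\in e}x_i^2$ and the fact that $A_{ii} = 0$; these are lower-order and can be absorbed, but need to be checked so the deterministic shift relating $A$, $\expval A$, and $\tilde A$ (used in Lemma~\ref{lemma:EAeigs}) is consistent. I would present the net argument in the appendix as indicated, with the per-vector Bernstein estimate as the technical heart.
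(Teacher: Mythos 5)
Your proposal is correct and follows essentially the same route as the paper's proof: fix a unit vector, decompose the quadratic form as a sum over hyperedges of independent, bounded, mean-zero terms (your coefficient $(\1_e^\top\x)^2 - \sum_{i\in e}x_i^2$ equals $2\sum_{i<j,\,i,j\in e}x_ix_j$, exactly as in the paper), bound the variance proxy via a Cauchy--Schwarz and a counting argument giving $O(d^2\binom{n}{d-2})$, apply Hoeffding, and union-bound over an $\epsilon$-net of the sphere. The only superficial differences are the net radius ($1/4$ vs.\ the paper's $1/2$) and your mention of Bernstein as an alternative to Hoeffding, neither of which changes the argument.
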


When $d=2$, Theorem \ref{concentration} is a concentration result for Wigner matrices. Our result includes the case where $d$ is growing with $n$. Lemma 2 in \cite{ahn2018hypergraph} is a similar concentration result for the adjacency matrix $A$ for random hypergraphs, but with probability $1-O(1/n)$.  However, to make our Algorithm \ref{alg} succeed with high probability with $k=\Theta(\sqrt n)$ many clusters, we need a concentration bound of the spectral norm of $A$ with exponentially small failure probability since we need to take a union bound over $2^k$ many events in Section \ref{sec:delete}. 

\begin{proof}[Proof of Theorem \ref{concentration}]
Consider the centered matrix $M:=A-\mathbb EA$, then each entry $M_{ij}$  is a centered random variable. Let $M_e=T_e-\mathbb E[T_e]$. Let $\mathbb S^{n-1}$ be the unit sphere in $\mathbb R^n$ (using the $l_2$-norm).  By the definition of the spectral norm,
\begin{align*}
\|M\|_2=\sup_{\x\in \mathbb S^{n-1}}|\langle M\x,\x\rangle|.
\end{align*}
 Let $\mathcal N$ be an $\varepsilon$-net on $\mathbb S^{n-1}$.  Then for any $\x\in\mathbb S^{n-1}$, there exists some $\y\in\mathcal N$ such that $\|\x-\y\|_2\leq \epsilon$. Then we have
 $$\|M\x\|_2-\|M\y\|_2\leq \|M\x-M\y\|_2\leq \|M\|_2\|\x-\y\|_2\leq \epsilon \|M\|_2.
 $$
 For any $\y \in \mathcal N$, if we take the supremum over $\x$, we have
 $$(1-\epsilon)\|M\|_2\leq \|M\y\|_2\leq \sup_{\z \in\mathcal N}\|M\z\|_2.
 $$
 Therefore
 \begin{align}\label{eps}
 \|M\|_2\leq \frac{1}{1-\varepsilon}\sup_{\x\in\mathcal N}\|M\x\|_2= \frac{1}{1-\varepsilon}\sup_{\x\in\mathcal N}\langle M\x, \x \rangle	
 \end{align}

Now we fix an $\x = (x_1, \ldots, x_n)^\top \in\mathbb S^{n-1}$ first, and   prove a concentration inequality for $\|M\x\|_2$.
Let $E$ be the hyperedge set in a complete $d$-uniform hypergraph on $[n]$. We have
\begin{align*}
\langle M\x,\x\rangle &=\sum_{i\not=j}M_{ij}x_ix_j
                    =2\sum_{i<j}M_{ij}x_ix_j
                    =2\sum_{i<j}(\sum_{e\in E: i,j\in e}M_e)x_ix_j
                =2\sum_{e\in E}(\sum_{i,j\in e, i<j}x_ix_j)M_e.
\end{align*}
Let $Y_e:=(\sum_{ij\in e, i<j}x_ix_j)M_e$, then $$\langle M\x,\x\rangle=2\sum_{e\in E} Y_e$$ where  $\{Y_e\}_{e\in E}$ are  independent. Note that $|M_e|\leq 1$, so we have
$$|Y_e|=|\sum_{ij\in e, i<j}x_ix_jM_e|\leq |\sum_{ij\in e, i<j}x_ix_j|
$$ 
%$\textnormal{Var}[Y_e]\leq \frac{1}{2}(\sum_{ij\in e, i<j}x_ix_j)^2 $
By Hoeffding's inequality,
\begin{align}\label{hoeff}
\mathbb P(|\sum_{e\in E}Y_e|\geq t)\leq 2\exp\left(-\frac{2t^2}{4\sum_{e\in E}|\sum_{ij\in e, i<j}x_ix_j|^2}\right).
\end{align}
From Cauchy's inequality, we have 
\begin{align}
\sum_{e\in E}|\sum_{ij\in e, i<j}x_ix_j|^2&\leq {{d}\choose{2}}\sum_{e\in E}\sum_{ij\in e,i<j}x_i^2x_j^2\leq {{d}\choose{2}}{{n-2}\choose{d-2}}\sum_{1\leq i<j\leq n}x_i^2x_j^2\notag \\
&\leq {{d}\choose{2}}{{n-2}\choose{d-2}}\frac{1}{2}\left(\sum_{i}x_i^2\right)^2\leq \frac{1}{4}d^2{{n}\choose{d-2}}.\label{cauchy}
\end{align}
Therefore from \eqref{hoeff} and \eqref{cauchy},
\begin{align*}
\mathbb P(|\langle M\x,\x\rangle|\geq 2t)\leq 2\exp\left(-\frac{2t^2}{{{n}\choose{d-2}}d^2}\right)	.
\end{align*}
Taking $\displaystyle t=\frac{3}{2}d\sqrt{d{{n}\choose{d-1}}}$, we have
\begin{align*}
\mathbb P\left(|\langle M\x,\x\rangle|\geq 3d\sqrt{d}\sqrt{{{n}\choose{d-1}}}\right)\leq 2\exp\left(-\frac{9d{{n}\choose{d-1}}}{2{{n}\choose{d-2}}}\right)\leq \exp(-3n).	
\end{align*}
Since $|\mathcal N |\leq (\frac{2}{\varepsilon}+1)^n$ (see Corollary 4.2.11 in \cite{vershynin2018high} for example), we can take $\varepsilon=1/2$ and by a union bound,  we have
\begin{align}\label{prob}
\mathbb P\left(\sup_{\x\in\mathcal N}|\langle M\x,\x\rangle|\geq 3d\sqrt{d}\sqrt{{{n}\choose{d-1}}}\right)\leq 5^n\exp(-3n)\leq e^{-n}.	
\end{align}
So we have from \eqref{eps}, \eqref{prob} 
\begin{align*}
\mathbb P\left(\|M\|_2\geq 6d\sqrt{d{{n}\choose{d-1}}}\right)\leq e^{-n}.
\end{align*}
\end{proof}

%We can use Theorem \ref{concentration} to obtain the following bound.
%\begin{lemma}
%	$$\|A-\tilde{A}\|\leq 7d\sqrt{d{{n-1}\choose{d-1}}}$$ with probability at least $1-e^{-7n}$.	
%\end{lemma}
%\begin{proof}
%Since with probability at least $1-e^{-7n}$,
%\begin{align}
%\|A-\tilde{A}\|&=\|A-\mathbb EA-\left({{n/k-2}\choose{d-2}}(p-q)+{{n-2}\choose{d-2}}q\right)I\|	\\
%&\leq 6d\sqrt{d{{n-1}\choose{d-1}}}+{{n/k-2}\choose{d-2}}(p-q)+{{n-2}\choose{d-2}}q\\
%&\leq 2{{n-2}\choose{d-2}}
%\end{align}
%
%\end{proof}

Since $|\lambda_i(A) - \lambda_i(\expval A)| \leq \|A - \expval A\|_2$ for $1\leq i\leq n$, we see that the largest $k$ eigenvalues of $A$ are separated from the remaining $n - k$ by at least ${s - 2 \choose d - 2}(p - q)s - 12d\sqrt{d{n \choose d - 1}}$.  Figure~\ref{fig:eigs} depicts this separation in the eigenvalues, which is necessary to bound the difference in the dominant rank-$k$ projectors of $A$ and $\expval A$ in the next section.

\begin{figure}\centering
\begin{tikzpicture}[scale=.08]
% Number line
\draw[<->] (-25, 0) -- (125, 0);
% Ticks
\foreach \x/\lbl in {15/{$8\sqrt n$}, 35/{}}
\draw (\x, 3) -- (\x, -3);
% Eigenvalues
\foreach \x in {-5, 40, 115}
\fill (\x, 0) circle(1);
\foreach \x in {-9.8, 5, 10, 11.8, -19.2, -17, -16, -12.2, 0, -1.8, 2.4, 12.8, 36, 45, 47, 49, 53}
\fill (\x, 0) circle(1);
\draw (115, 2) node[anchor=south]{$\lambda_1$};
\draw (45, 2) node[anchor=south]{$\lambda_2, \ldots, \lambda_k$};
\draw (0, 2) node[anchor=south]{$\lambda_{k + 1}, \ldots, \lambda_n$};
%\draw (78, -3) node[anchor=north]{${{s-2}\choose{d-2}}(p-q)(s-1)-{{n-2}\choose{d-2}}q - 6d\sqrt{d{n \choose d - 1}}$};
%{{s-2}\choose{d-2}}(p-q)-{{n-2}\choose{d-2}}q + 6d\sqrt{d{n \choose d - 1}}
% Gap
\foreach \x in {15, 35}
\draw (\x, -5) -- (\x, -7);
\draw (15, -6) -- (35, -6);
\draw (25, -6)  node[anchor=north] {${s - 2 \choose d - 2}(p - q)s - 12d\sqrt{d{n \choose d - 1}}$};
\end{tikzpicture}
\caption{The distribution of eigenvalues of $A$.}\label{fig:eigs}
\end{figure}
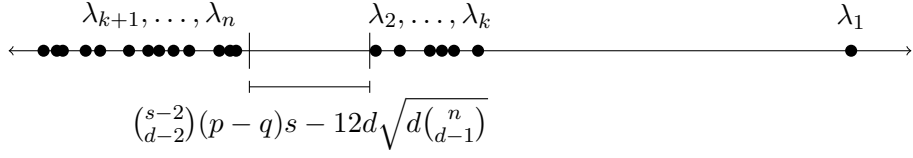

\section{Dominant eigenspaces and projectors}\label{sec:projectors}

Our recovery algorithm is based on the \emph{dominant rank-$k$ projector} of the adjacency matrix $A$.

\begin{definition}[Dominant eigenspace]
If $X$ is a $n \times n$ Hermitian or real symmetric matrix, the dominant $r$-dimensional eigenspace of $X$, denoted $\mathbf E_r(X)$, is the subspace of ~$\R^n$ or $\C^n$ spanned by eigenvectors of $X$ corresponding to its $r$ largest eigenvalues.
\end{definition}

Note that by this definition, if $\lambda_r(X) = \lambda_{r + 1}(X)$, then $\mathbf E_r(X)$ actually has dimension $> r$, but that will never be the case in this analysis.

\begin{definition}[Dominant rank-$r$ projector]
If $X$ is a $n \times n$ Hermitian or real symmetric matrix, the dominant rank-$r$ projector of $X$, denoted $P_r(X)$, is the orthogonal projection operator onto $\mathbf E_r(X)$.
\end{definition}
$P_r(X)$ is a rank-$r$, self-adjoint operator which acts as the identity on $\mathbf E_r(X)$.  It has $r$ eigenvalues equal to 1 and $n - r$ equal to 0.  If $\vv_1, \ldots, \vv_r$ is an orthonormal basis for $\mathbf E_r(X)$, then
\begin{equation}\label{eq:Pkcomp}
P_r(X) = \sum_{i = 1}^r\vv_i\vv_i^*,
\end{equation}
where $\vv^*$ denotes either the transpose or conjugate transpose of $\vv$, depending on whether we are working over $\R$ or $\C$. Let us define $Y$ to be the \emph{incidence matrix} of $\CC$; i.e.,
\begin{align}\label{eq:incidence}
Y_{uv} := \left\{
\begin{array}{ll}
1	& \textrm{if $u, v$ are in the same part of $\CC$},	\\
0	& \textrm{else}
\end{array}
\right. \end{align}
Thus, it is our goal to reconstruct $Y$ given $H \sim H(n, d, \CC, p, q)$.
\begin{theorem}\label{thm:proj}
Let $A, \expval A$, and $\tilde A$ be defined as in Sections~\ref{sec:weightedgraphs} and~\ref{sec:concentration}.  Then
\[P_k(\expval A) = P_k(\tilde A) = P_k(Y) = \frac1sY.\]
\end{theorem}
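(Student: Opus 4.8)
The plan is to prove the three equalities, working from right to left, using the block structure of $Y$ together with an explicit formula expressing $\tilde A$ as a linear combination of $Y$ and the all-ones matrix $J = \1\1^*$. Set $a := {s - 2 \choose d - 2}(p - q)$ and $b := {n - 2 \choose d - 2}q$, so that the scalar appearing in the definition of $\tilde A$ is exactly $a + b$.

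\emph{Step 1: $P_k(Y) = \frac1s Y$.} After relabeling vertices so that each cluster is a contiguous block, $Y$ is block diagonal with $k$ copies of the $s \times s$ all-ones matrix; hence $Y$ has eigenvalue $s$ with multiplicity $k$ and eigenvalue $0$ with multiplicity $n - k$, and $\mathbf E_k(Y) = \mathrm{span}\{\1_{C_1}, \ldots, \1_{C_k}\}$. Since the $\1_{C_i}$ are pairwise orthogonal with $\|\1_{C_i}\|_2^2 = s$, the orthogonal projector onto this span is $\sum_{i = 1}^k \frac1s \1_{C_i}\1_{C_i}^* = \frac1s Y$. (Equivalently, one checks directly that $\frac1s Y$ is symmetric and idempotent because $Y^2 = sY$, has rank $\frac1s \operatorname{tr} Y = k$, and has range $\mathrm{col}(Y) = \mathbf E_k(Y)$, which identifies it as $P_k(Y)$.)

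\emph{Step 2: $P_k(\tilde A) = P_k(Y)$.} From the formula for $\expval A_{ij}$ preceding Lemma~\ref{lemma:EAeigs} we have $\expval A = a(Y - I) + b(J - I)$, and adding $(a + b)I$ gives $\tilde A = aY + bJ$. Since $\1 = \sum_{i = 1}^k \1_{C_i} \in \mathrm{col}(Y)$ and $J = \1\1^*$, we get $\mathrm{col}(J) \subseteq \mathrm{col}(Y)$, hence $\mathrm{col}(\tilde A) \subseteq \mathrm{col}(Y)$. On the other hand, Lemma~\ref{lemma:EAeigs} together with $\tilde A = \expval A + (a + b)I$ shows the eigenvalues of $\tilde A$ are $as + bn$ (once), $as$ ($k - 1$ times), and $0$ ($n - k$ times); in particular $\tilde A$ has rank $k = \dim \mathrm{col}(Y)$, so $\mathrm{col}(\tilde A) = \mathrm{col}(Y)$. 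As $\tilde A$ is symmetric with all nonzero eigenvalues strictly positive (here $a > 0$, $s > 0$, $b \geq 0$), its dominant rank-$k$ eigenspace equals its range; thus $\mathbf E_k(\tilde A) = \mathrm{col}(\tilde A) = \mathrm{col}(Y) = \mathbf E_k(Y)$, and therefore $P_k(\tilde A) = P_k(Y)$. Note this argument does not require $q > 0$: if $q = 0$ then $b = 0$, $\tilde A = aY$, and the conclusion is immediate.

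\emph{Step 3: $P_k(\expval A) = P_k(\tilde A)$, and conclusion.} The matrices $\expval A$ and $\tilde A = \expval A + (a + b)I$ have the same eigenvectors, and subtracting a scalar multiple of the identity preserves the ordering of eigenvalues; since $\tilde A$ has a spectral gap between its $k$-th largest eigenvalue ($as$, or $as + bn$ when $k = 1$) and its $(k+1)$-th ($0$), so does $\expval A$ (the gap is again $as > 0$, by Lemma~\ref{lemma:EAeigs}). Hence the dominant rank-$k$ eigenspaces coincide, $\mathbf E_k(\expval A) = \mathbf E_k(\tilde A)$, so $P_k(\expval A) = P_k(\tilde A)$. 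Chaining the three identities yields $P_k(\expval A) = P_k(\tilde A) = P_k(Y) = \frac1s Y$.

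The whole argument is linear-algebra bookkeeping; the one point deserving care is the implication in Step 2 that $\mathrm{col}(\tilde A) \subseteq \mathrm{col}(Y)$ together with equal rank and symmetry forces the \emph{dominant} rank-$k$ eigenspaces to agree. This uses two facts: for a symmetric matrix all of whose nonzero eigenvalues are positive the range coincides with the dominant eigenspace, and the rank-$k$ dominant eigenspaces in question are genuinely well-defined — which is exactly where the spectral gaps supplied by Lemma~\ref{lemma:EAeigs} enter.
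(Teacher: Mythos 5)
Your proof is correct and follows essentially the same route as the paper's: write $\tilde A$ as a linear combination of $Y$ and $J = \1\1^*$, observe that $\mathrm{col}(J) \subseteq \mathrm{col}(Y)$, and match ranks to conclude the dominant eigenspaces of $Y$, $\tilde A$, and $\expval A$ all coincide with $\mathrm{span}\{\1_{C_1},\ldots,\1_{C_k}\}$. Two small points where you are slightly more careful than the paper's terse proof: you make explicit that equating column spaces with \emph{dominant} eigenspaces requires all nonzero eigenvalues to be positive, and you note the argument survives $q = 0$ (the paper writes its decomposition with constants $a > b > 0$, tacitly assuming $q > 0$, even though the model allows $q = 0$). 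Both are worth flagging but neither changes the substance of the argument.
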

\begin{proof}
Let $\1_{C_i} \in \{0, 1\}^n$ denote the indicator vector for cluster $C_i$ and $J_n$ the $n \times n$ all ones matrix.  Then we can write
\[Y = \sum_{i = 1}^k\1_{C_i}\1_{C_i}^\top, \quad \tilde A = (a - b)Y + bJ_n, \quad \expval A = \tilde A - aI_n.\]
for some constants $a > b > 0$.  Thus, $\left\{\frac1{\sqrt s}\1_{C_i} : i = 1, \ldots, k\right\}$ is an orthonormal basis for the column space of both $Y$ and $\tilde A$, and hence, in accordance with~\eqref{eq:Pkcomp},
\[P_k(Y) = P_k(\tilde A) = \sum_{i = 1}^k\frac1s\1_{C_i}\1_{C_i}^\top = \frac1sY.\]
Now, observe that the eigenvalues of $\expval A$ are those of $\tilde A$ shifted down by $a$, and $\vv$ is an eigenvector of $\expval A$ if and only if it is an eigenvector of $\expval A$; hence, the dominant $k$-dimensional eigenspace of $\expval A$ is the same as the column space of $\tilde A$, and therefore $P_k(\expval A) = P_k(\tilde A)$.
\end{proof}

Thus, $P_k(\expval A) = P_k(\tilde A)$ gives us all the information we need to reconstruct $Y$.  Unfortunately, a SBM recovery algorithm doesn't have access to $\expval A$ or $\tilde A$ (if it did the problem would be trivial), but the following theorem shows that the random matrix $P_k(A)$ is a good approximation to $P_k(\expval A)$ and thus reveals the underlying rank-$k$ structure of $A$:

\begin{theorem}\label{thm:projdiff}
Assume~\eqref{eq:concentration} holds.  Then
\[\|P_k(A) - P_k(\expval A)\|_2 \leq \varepsilon\]
and
\[\|P_k(A) - P_k(\expval A)\|_\frob \leq \sqrt{2k}\varepsilon\]
for any $\displaystyle\epsilon \geq \frac{6d\sqrt{d{n \choose d - 1}}}{{s - 2 \choose d - 2}(p - q)s - 12d\sqrt{d{n \choose d - 1}}}$.
\end{theorem}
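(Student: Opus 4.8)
The plan is to derive this from the Davis--Kahan $\sin\Theta$ theorem, feeding it the exact eigenvalue separation of $\expval A$ from Lemma~\ref{lemma:EAeigs} together with Weyl's inequality and the concentration bound of Theorem~\ref{concentration}. Write $E := A - \expval A$ and $r := 6d\sqrt{d{n \choose d - 1}}$, so that \eqref{eq:concentration} reads $\|E\|_2 \leq r$. By Lemma~\ref{lemma:EAeigs},
\[
\lambda_k(\expval A) - \lambda_{k + 1}(\expval A) = {s - 2 \choose d - 2}(p - q)(s - 1) + {s - 2 \choose d - 2}(p - q) = {s - 2 \choose d - 2}(p - q)s =: g,
\]
and the dominant eigenspace $\mathbf E_k(\expval A)$ is spanned by eigenvectors with eigenvalue $\geq \lambda_k(\expval A)$, while its orthogonal complement corresponds to eigenvalues $\leq \lambda_k(\expval A) - g$. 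Set $\delta := g - 2r = {s - 2 \choose d - 2}(p - q)s - 12d\sqrt{d{n \choose d - 1}}$; the hypothesis $\epsilon \geq r/\delta$ forces $\delta > 0$, so in particular (by Weyl, $|\lambda_i(A) - \lambda_i(\expval A)| \leq r$) the top $k$ eigenvalues of $A$ are strictly separated from the bottom $n - k$ and $P_k(A)$ is a well-defined rank-$k$ projector.

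Next I would invoke the one-sided form of the $\sin\Theta$ theorem (the version with constant $1$, not $2$): since every eigenvalue of $\expval A$ attached to $\mathbf E_k(\expval A)$ is $\geq \lambda_k(\expval A)$, while every eigenvalue of $A$ attached to $\mathbf E_k(A)^\perp$ is $\leq \lambda_{k + 1}(A) \leq \lambda_{k + 1}(\expval A) + r = \lambda_k(\expval A) - (g - r) \leq \lambda_k(\expval A) - \delta$, these two groups of eigenvalues are separated by at least $\delta$. Hence, for the principal angles $\Theta$ between $\mathbf E_k(A)$ and $\mathbf E_k(\expval A)$,
\[
\|P_k(\expval A)\bigl(I - P_k(A)\bigr)\|_2 = \|\sin\Theta\|_2 \leq \frac{\|E\|_2}{\delta} \leq \frac{r}{\delta} \leq \epsilon .
\]
Because $P_k(A)$ and $P_k(\expval A)$ are orthogonal projectors of the same rank $k$, the CS-decomposition identities $\|P_k(A) - P_k(\expval A)\|_2 = \|P_k(\expval A)(I - P_k(A))\|_2$ and $\|P_k(A) - P_k(\expval A)\|_\frob = \sqrt{2}\,\|P_k(\expval A)(I - P_k(A))\|_\frob$ finish both bounds: the first gives $\|P_k(A) - P_k(\expval A)\|_2 \leq \epsilon$ directly, and for the Frobenius norm I would note that $P_k(\expval A)(I - P_k(A))$ has rank $\leq k$, so $\|P_k(\expval A)(I - P_k(A))\|_\frob \leq \sqrt{k}\,\|P_k(\expval A)(I - P_k(A))\|_2 \leq \sqrt{k}\,\epsilon$, whence $\|P_k(A) - P_k(\expval A)\|_\frob \leq \sqrt{2k}\,\epsilon$.

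The substantive points are purely bookkeeping: (i) making sure to use the sharp, one-sided Davis--Kahan bound so that the denominator is the full separation $\delta$ depicted in Figure~\ref{fig:eigs} rather than a version with an extra factor of $2$; (ii) checking that $\delta > 0$, which is exactly what the admissible range for $\epsilon$ encodes, so that the dominant rank-$k$ projectors are unambiguous; and (iii) recalling the equal-rank projector identities to convert the $\sin\Theta$ bound into statements about $\|P_k(A) - P_k(\expval A)\|_2$ and $\|P_k(A) - P_k(\expval A)\|_\frob$. Note that only the lower bound on $\epsilon$ from \eqref{eq:epsilonconds} is used here; the upper bound plays no role in this theorem.
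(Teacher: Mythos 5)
Your proof is correct and is essentially the same argument the paper makes, just unpacked. The paper outsources the Davis--Kahan and CS-decomposition bookkeeping to a single cited result, Lemma~\ref{lem:col18}, applied with $X = A$, $Y = \expval A$, and thresholds taken from Lemma~\ref{lemma:EAeigs} shifted inward by $6d\sqrt{d{n \choose d - 1}}$ on each side; this produces the gap ${s - 2 \choose d - 2}(p - q)s - 12d\sqrt{d{n \choose d - 1}}$, which is exactly your $\delta = g - 2r$, and the lemma then hands back both the spectral-norm and Frobenius-norm bounds with the stated constants. What you do differently is re-derive that packaged lemma from first principles: Weyl's inequality for the eigenvalue shift, the one-sided $\sin\Theta$ theorem to bound $\|P_k(\expval A)(I - P_k(A))\|_2$, and the equal-rank projector identities $\|P - Q\|_2 = \|P(I - Q)\|_2$ and $\|P - Q\|_\frob = \sqrt{2}\,\|P(I - Q)\|_\frob$ combined with the rank-$k$ estimate $\|\cdot\|_\frob \leq \sqrt{k}\,\|\cdot\|_2$. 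One minor observation: the separation your one-sided $\sin\Theta$ step actually exhibits is $g - r$ (comparing eigenvalues of $\expval A$ on $\mathbf E_k(\expval A)$ against eigenvalues of $A$ on $\mathbf E_k(A)^\perp$), which is slightly larger than $\delta = g - 2r$; you then relax to $\delta$ at the end, which is what the two-sided hypotheses of Lemma~\ref{lem:col18} (thresholding both spectra) would yield and what the admissible lower bound on $\epsilon$ in the theorem is calibrated against, so both routes land on $r/\delta \leq \epsilon$.
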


To prove Theorem \ref{thm:projdiff}, we use the following Lemma from ~\cite[Lemma~4]{cole2018recovering}.

\begin{lemma}\label{lem:col18}
	Let $X,Y\in\mathbb R^{n\times n}$ be symmetric. Suppose that the largest $k$ eigenvalues of both $X,Y$ are at least $\beta$, and the remaining $n-k$ eigenvalues of both $X,Y$ are at most $\alpha$, where $\alpha<\beta$. Then 
	\begin{align}
	\|P_k(X)-P_k(Y)\|_2\leq \frac{\|X-Y\|_2}{\beta-\alpha},\\
	\|P_k(X)-P_k(Y)\|_F\leq \frac{\sqrt{2k}\|X-Y\|_2}{\beta-\alpha}.	
	\end{align}

\end{lemma}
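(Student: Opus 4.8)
The plan is to prove this as a finite-dimensional instance of the Davis--Kahan $\sin\Theta$ theorem. Write $P:=P_k(X)$, $Q:=P_k(Y)$ (orthogonal projectors of rank $k$), and $P^\perp:=I-P$, $Q^\perp:=I-Q$. The argument reduces to controlling the two off-diagonal blocks $P^\perp Q$ and $PQ^\perp$: from $P-Q = PQ^\perp-P^\perp Q$ one gets $(P-Q)^*(P-Q)=Q^\perp PQ^\perp+QP^\perp Q$, and since the first term is supported on $\operatorname{range}(Q^\perp)$ and the second on $\operatorname{range}(Q)$ (orthogonal subspaces),
\[\|P-Q\|_2=\max\bigl(\|PQ^\perp\|_2,\|P^\perp Q\|_2\bigr),\qquad \|P-Q\|_F^2=\|PQ^\perp\|_F^2+\|P^\perp Q\|_F^2.\]
Since $\operatorname{rank}(PQ^\perp),\operatorname{rank}(P^\perp Q)\le k$, both bounds in the lemma follow once we show $\|P^\perp Q\|_2$ and $\|PQ^\perp\|_2$ are each at most $\|X-Y\|_2/(\beta-\alpha)$ (for the Frobenius bound, also use $\|\cdot\|_F\le\sqrt k\,\|\cdot\|_2$ on each block).

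For $R:=P^\perp Q$ I would derive a Sylvester equation. Since $X$ commutes with $P$ (hence with $P^\perp$) and $\operatorname{range}(Q)$ is $Y$-invariant, evaluating on $w\in\operatorname{range}(Q)$ gives $XRw-R(QYQ)w=P^\perp Xw-P^\perp Yw=-P^\perp(Y-X)w$, i.e.
\[XR-R(QYQ)=-P^\perp(Y-X)Q\]
as operators $\operatorname{range}(Q)\to\operatorname{range}(P^\perp)$. By hypothesis $X$ on $\operatorname{range}(P^\perp)$ has spectrum in $(-\infty,\alpha]$ and $QYQ$ on $\operatorname{range}(Q)$ (where it equals $Y$) has spectrum in $[\beta,\infty)$, so the two self-adjoint operators have spectra separated by $\beta-\alpha>0$; swapping the roles of $X$ and $Y$ gives the analogous equation $Y(Q^\perp P)-(Q^\perp P)(PXP)=-Q^\perp(X-Y)P$ with the same separation, which is legitimate because the hypothesis is symmetric in $X,Y$.

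Then I would apply the classical norm bound for $AR-RB=C$ with $A,B$ self-adjoint, $\operatorname{spec}(A)\subseteq(-\infty,\alpha]$, $\operatorname{spec}(B)\subseteq[\beta,\infty)$: shifting $A,B$ by $\tfrac{\alpha+\beta}{2}I$ leaves the equation unchanged and makes $A$ negative and $B$ positive definite, so the unique solution is $R=-\int_0^\infty e^{tA}Ce^{-tB}\,dt$ and $\|R\|_2\le\|C\|_2\int_0^\infty e^{-t(\beta-\alpha)}\,dt=\|C\|_2/(\beta-\alpha)$ (equivalently, diagonalizing reduces to the scalar relations $(\alpha_j-\mu_i)r_{ij}=c_{ij}$ with $|\alpha_j-\mu_i|\ge\beta-\alpha$). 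With $C=-P^\perp(Y-X)Q$ and $C=-Q^\perp(X-Y)P$, and $\|P^\perp(Y-X)Q\|_2\le\|X-Y\|_2$ since projectors are contractions, this yields $\|P^\perp Q\|_2,\|PQ^\perp\|_2\le\|X-Y\|_2/(\beta-\alpha)$, and combining with the first paragraph finishes the proof.

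The step I expect to be most delicate is not the Davis--Kahan/Sylvester bound (which is standard and could be cited) but the bookkeeping around it: verifying the projector identities that reduce $\|P-Q\|$ to the two blocks, and making sure the Sylvester argument uses only the \emph{lower} end of $\operatorname{spec}(X)$ together with the \emph{upper} end of $\operatorname{spec}(Y)$ (and the mirror statement) --- exactly the data the hypotheses provide --- and that the relevant Sylvester equation has a unique solution, so that the integral formula genuinely computes $P^\perp Q$.
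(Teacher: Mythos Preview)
Your argument is correct and is precisely the standard Davis--Kahan $\sin\Theta$ proof: the projector identity $(P-Q)^2=Q^\perp PQ^\perp+QP^\perp Q$ reduces both norm bounds to controlling $\|P^\perp Q\|_2$ and $\|PQ^\perp\|_2$, and the Sylvester-equation step (with the spectral gap $\beta-\alpha$ supplying the denominator) handles each block. The bookkeeping you flag as delicate is all fine; in particular $X$ commutes with $P_k(X)$, $\operatorname{range}(Q)$ is $Y$-invariant, and the integral representation converges because after the shift both exponentials decay.

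There is nothing to compare against in the paper: the authors do not prove this lemma at all but quote it verbatim from \cite[Lemma~4]{cole2018recovering} and then invoke it to prove Theorem~\ref{thm:projdiff}. So you have supplied a complete proof where the paper only gives a citation. If you want to streamline, you could simply cite Davis--Kahan (e.g.\ \cite{davis1970rotation} or the user-friendly version in \cite{yu2015useful}) for the spectral bound and note that $\|P-Q\|_F^2=2\|P^\perp Q\|_F^2\le 2k\|P^\perp Q\|_2^2$ to get the Frobenius bound --- but what you wrote already does the job.
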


\begin{proof}[Proof of Theorem \ref{thm:projdiff}]
Apply Lemma \ref{lem:col18} with $X = A$, $Y = \expval A$ and
\[\alpha = {{s-2}\choose{d-2}}(p-q)(s-1)-{{n-2}\choose{d-2}}q - 6d\sqrt{d{{n}\choose{d-1}}},\]
\[\beta = -{{s-2}\choose{d-2}}(p-q)-{{n-2}\choose{d-2}}q + 6d\sqrt{d{{n}\choose{d-1}}}.\]
Note that in order for this to work we need $\alpha > \beta$, i.e.
\[{s - 2 \choose d - 2}(p - q)s > 12d\sqrt{d{n \choose d - 1}}.\]
\end{proof}

\section{Constructing an approximate cluster}\label{sec:approxcluster}

In this section we show how to use $P_k(A)$ to construct an ``approximate cluster'', i.e.\ a set with small symmetric difference with one of the clusters.  We will show that
\begin{itemize}
\item	If $|W| = s$ and $\|P_k(A)\1_W\|_2$ is large, then $W$ must have large intersection with some cluster (Lemma~\ref{lemma:largenorm=>approxcluster})
\item	Such a set $W$ exists among the sets $W_1, \ldots, W_n$, where $W_v$ is the indices of the $s - 1$ largest entries in column $v$ of $P_k(A)$, along with $v$ itself (Lemma~\ref{lemma:existsgood}).
\end{itemize}
The intuition is that if $\|P_k(A) - P_k(\expval A)\|_2 \leq \varepsilon$, then
\[\|P_k(A)\1_W\|_2^2 \approx \|P_k(\expval A)\1_W\|_2^2 = \frac1s\sum_{i = 1}^k|W \cap C_i|^2,\]
and this quantity is maximized when $W$ comes mostly from a single cluster $C_i$.

%
%
%
%
%We begin with the observation that for any $W \subseteq [n]$, $||P_k(\expval A)\1_W||_2$ does not change much if we replace $\expval A$ with $A$:
%
%\begin{obs}\label{obs:triangle}
%Assume~\eqref{eq:concentration} holds.  Then for any $W \subseteq [n]$, $$\big|||P_k(A)\1_W||_2 - ||P_k(\expval A)\1_W||_2\big| \leq \varepsilon\sqrt{|W|}.$$
%\end{obs}
%\begin{proof}
%By Theorem~\ref{thm:projdiff} we have $||P_k(A) - P_k(\expval A)||_2|| \leq \varepsilon$.  Thus, by the triangle inequality, 
%\begin{eqnarray*}
%\big|||P_k(A)\1_W||_2 - ||P_k(\expval A)\1_W||_2\big|	& \leq	& ||P_k(A)\1_W - P_k(\expval A)\1_W||_2	\\
%& = 	& ||(P_k(A) - P_k(\expval A))\1_W||_2	\\
%& \leq	& ||P_k(A) - P_k(\expval A)||_2||\1_W||_2	\\
%& \leq	& \varepsilon\sqrt{|W|}.\quad\quad\quad\quad\quad\quad\quad\quad\quad\quad\quad\qedhere
%\end{eqnarray*}
%\end{proof}
%
%The following lemma shows that if $W \subseteq [n]$ is not too big and $||P_k(A)\1_W||_2$ is large, then $W$ comes mostly from a single cluster:

Lemmas~\ref{lemma:largenorm=>approxcluster} and~\ref{lemma:existsgood} below are essentially the same as Lemmas~18 and~17 in~\cite{cole2017simple}.  As $P_k(A) = \frac1s\sum_i\1_{C_i}\1_{C_i}^\top$ as in the graph case (Theorem~\ref{thm:proj}), we can import their proofs directly from the graph case.  However, we present a simpler proof for Lemma~\ref{lemma:largenorm=>approxcluster}.

\begin{lemma}\label{lemma:largenorm=>approxcluster}
Assume~\eqref{eq:concentration} holds and  $\displaystyle \frac{6d\sqrt{d{n \choose d - 1}}}{{s - 2 \choose d - 2}(p - q)s - 12d\sqrt{d{n \choose d - 1}}}\leq \epsilon\leq \frac{1}{12}$.  Let $|W| = s$ and $||P_k(A)\1_{W}||_2 \geq (1 - 2\varepsilon)\sqrt s$.  Then $|W \cap C_i| \geq  (1 - 6\varepsilon)s$ for some $i$. 
\end{lemma}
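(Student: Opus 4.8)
The plan is to exploit the exact identity $P_k(\expval A) = \frac1s Y = \frac1s\sum_i \1_{C_i}\1_{C_i}^\top$ (Theorem~\ref{thm:proj}) to compute $\|P_k(\expval A)\1_W\|_2^2$ explicitly, then transfer the estimate to $P_k(A)$ using the operator-norm bound $\|P_k(A) - P_k(\expval A)\|_2 \le \epsilon$ from Theorem~\ref{thm:projdiff} (valid under the stated lower bound on $\epsilon$ and assumption~\eqref{eq:concentration}). First I would write $a_i := |W \cap C_i|$, so that $\sum_i a_i = |W| = s$, and observe that $P_k(\expval A)\1_W = \frac1s\sum_i a_i \1_{C_i}$, whence $\|P_k(\expval A)\1_W\|_2^2 = \frac1{s^2}\sum_i a_i^2 \cdot s = \frac1s\sum_i a_i^2$ because the $\1_{C_i}$ are orthogonal with squared norm $s$.

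Next I would use the triangle inequality in the form $\|P_k(\expval A)\1_W\|_2 \ge \|P_k(A)\1_W\|_2 - \|P_k(A) - P_k(\expval A)\|_2\|\1_W\|_2 \ge (1-2\epsilon)\sqrt s - \epsilon\sqrt s = (1-3\epsilon)\sqrt s$, using $\|\1_W\|_2 = \sqrt s$ and the hypothesis on $\|P_k(A)\1_W\|_2$. Squaring gives $\frac1s\sum_i a_i^2 \ge (1-3\epsilon)^2 s$, i.e. $\sum_i a_i^2 \ge (1-3\epsilon)^2 s^2$. The remaining step is purely combinatorial: among nonnegative integers $a_i$ summing to $s$ with each $a_i \le s$, a lower bound on $\sum a_i^2$ forces one $a_i$ to be close to $s$. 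Concretely, $\sum_i a_i^2 \le (\max_i a_i)\sum_i a_i = s\max_i a_i$, so $\max_i a_i \ge (1-3\epsilon)^2 s \ge (1-6\epsilon)s$, the last inequality since $(1-3\epsilon)^2 = 1 - 6\epsilon + 9\epsilon^2 \ge 1-6\epsilon$. Taking $i$ to be the maximizing index yields $|W \cap C_i| \ge (1-6\epsilon)s$, as desired. (The hypothesis $\epsilon \le \frac1{12}$ is what keeps $1-6\epsilon$ bounded away from $0$, so the conclusion is nonvacuous, though it is not strictly needed for the inequality itself.)

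I do not anticipate a serious obstacle here: the only subtlety is making sure the direction of the triangle inequality is right (we lose $\epsilon\sqrt s$, not gain it) and that we use the operator norm bound rather than the Frobenius bound, since $\sqrt{2k}\epsilon$ would be far too weak. The argument is essentially the "simpler proof" alluded to in the text — it avoids any case analysis by using the crude but sufficient bound $\sum a_i^2 \le s \max_i a_i$ in place of a more refined second-moment argument.
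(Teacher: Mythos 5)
Your proof is correct, and while it begins identically to the paper's (apply $\lVert P_k(A) - P_k(\expval A)\rVert_2 \le \epsilon$ and the triangle inequality to get $\lVert P_k(\expval A)\1_W\rVert_2 \ge (1-3\epsilon)\sqrt s$, then square using $P_k(\expval A)\1_W = \frac1s\sum_i a_i\1_{C_i}$ to obtain $\sum_i a_i^2 \ge (1-3\epsilon)^2 s^2$), the final step is genuinely different and cleaner. The paper poses a constrained optimization --- maximize $\frac1s\sum x_i^2$ subject to $\sum x_i = s$, $0 \le x_i \le t$ --- whose maximum is $\frac{t^2 + (s-t)^2}{s}$, and derives a contradiction by solving the resulting quadratic in $t$; this requires $t \ge s/2$, which is where the hypothesis $\epsilon \le \frac1{12}$ is used. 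You instead apply the one-line bound $\sum_i a_i^2 \le (\max_i a_i)\sum_i a_i = s\max_i a_i$, giving directly $\max_i a_i \ge (1-3\epsilon)^2 s \ge (1-6\epsilon)s$, with no case analysis, no contradiction, and no use of $\epsilon \le \frac1{12}$. Your inequality is looser at the intermediate stage (the paper's optimization is tight and would yield $\max_i a_i \gtrsim (1-3\epsilon)s$ for small $\epsilon$, whereas yours gives only $(1-6\epsilon)s$), but both suffice for the stated conclusion, and your observation that $\epsilon \le \frac1{12}$ is not actually needed here is correct --- the paper's hypothesis is an artifact of its proof technique rather than of the statement.
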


\begin{proof}
By Theorem~\ref{thm:projdiff},
$$\|(P_k(A)-P_k(\mathbb E A))\mathbf{1}_W\|_2\leq \epsilon\|\mathbf{1}_W\|_2=\epsilon \sqrt s.
$$
 And by the triangle inequality,
\begin{equation}\label{eq:triangle}
\|P_k(\expval A)\1_W\|_2 \geq \|P_k( A)\1_W\|_2-\epsilon \sqrt{s}\geq (1 - 2\varepsilon)\sqrt s - \varepsilon\sqrt{s} = (1 - 3\varepsilon)\sqrt s.
\end{equation}
We will show that in order for this to hold, $W$ must have large intersection with some cluster.

Fix $t$ such that $\frac{s}2 \leq t \leq s$.  Assume by way of contradiction that $|W \cap C_i| \leq t$ for all $i$.  Observe that by Theorem~\ref{thm:proj}
\begin{equation}\label{eq:sumofsquares}
\|P_k(\expval A)\1_W\|_2^2 = \frac1s\sum_{i = 1}^k|W \cap C_i|^2.
\end{equation}
Let $x_i = |W \cap C_i|$ and consider the optimization problem
\begin{eqnarray*}
\max			& & \frac1s\sum_{i = 1}^kx_i^2	\\
\textrm{s.t.}	& & \sum_{i = 1}^kx_i = s,	\\
				& & 0 \leq x_i \leq t \textrm{ for } i = 1, \ldots, k.
\end{eqnarray*}
It is easy to see that the maximum occurs when $x_i = t, x_j = s - t$ for some $i, j$, $x_l = 0$ for all $l \neq i, j$, and the maximum is $\frac{t^2}s + \frac{(s - t)^2}s$.  Thus, by~(\ref{eq:triangle}) and~(\ref{eq:sumofsquares}) we have
\[(1 - 3\varepsilon)^2s \leq \|P_k(\expval A)\1_W\|_2^2 \leq \frac{t^2}s + \frac{(s - t)^2}s.\]
Solving for $t$, this implies that
\[t \geq \left(\frac{1}2 + \frac12\sqrt{1 - 12\varepsilon + 18\varepsilon^2}\right)s > (1 - 6\varepsilon)s.\]

Thus, if we choose $t \in [s / 2, (1 - 6\varepsilon)s]$ we have a contradiction.  Let us choose $t$ to be as large as possible, $t = (1 - 6\varepsilon)s$.  Then it must be the case that $|W \cap C_i| \geq t = (1 - 6\varepsilon)s$ for some $i$.  Note that for the proof to go through we require $\frac{1}2 \leq 1 - 6\varepsilon$, which is satisfied if $\varepsilon \leq 1 / 12$.
\end{proof}

This lemma gives us a way to identify an ``approximate cluster'' using only $A$; however, it would take $\Omega(n^s)$ time to try all sets $W$ of size $s$.  However, if we define $W_v$ to be $v$ along with the indices of the $s - 1$ largest entries of column $v$ of $P_k(A)$ (as in Step~\ref{step:Wv} of Algorithm~\ref{alg}), then Lemma~\ref{lemma:existsgood} below will show that one of these sets satisfies the conditions of Lemma~\ref{lemma:largenorm=>approxcluster}; thus, we can produce an approximate cluster in polynomial time by taking the $W_v$ that maximizes $||P_k(A)\1_{W_v}||_2$.

\begin{lemma}\label{lemma:existsgood}
Assume~\eqref{eq:concentration} holds and $\displaystyle \epsilon \geq \frac{6d\sqrt{d{n \choose d - 1}}}{{s - 2 \choose d - 2}(p - q)s - 12d\sqrt{d{n \choose d - 1}}}$.  For $v = 1, \ldots, n$, let $W_v$ be defined as in Step~\ref{step:Wv} of Algorithm~\ref{alg}. Then there exists a column $v$ such that 
\begin{equation*}\label{goodcoldef}
\|P_k(A)\1_{W_v}\|_2 \geq (1 - 8\varepsilon^2 - \varepsilon)\sqrt s \geq (1 - 2\varepsilon)\sqrt s.
\end{equation*}

\end{lemma}

Lemmas~\ref{lemma:largenorm=>approxcluster} and~\ref{lemma:existsgood} together prove that, as long as~\eqref{eq:concentration} holds, Steps~\ref{step:proj}-\ref{step:approxcluster} successfully construct a set $W$ such that $|W| = s$ and $|W \cap C_i| \geq (1 - 6\varepsilon)s$ for some $i$.  In the following section we will see how to recover $C_i$ exactly from $W$.

\section{Exact recovery by counting hyperedges}\label{sec:exactrec}

Suppose we have a set $W \subset [n]$ such that $|W \triangle C_i| \leq \varepsilon s$ for some $i$ ($\triangle$ denotes symmetric difference).  In the graph case ($d = 2$) we can use $W$ to recover $C_i$ exactly w.h.p.\ as follows:
\begin{enumerate}
\item	Show that w.h.p.\ for any $u \in C_i$ will have at least $(p - \varepsilon)s$ neighbors in $C_i$, while any $v \notin C_i$ will have at most $(q + \varepsilon)s$ neighbors in $C_i$.  This follows from a simple Hoeffding argument.
\item	Show that that, if these bounds hold for any $u, v$, then (deterministically) any $u \in C_i$ will have at least $(p - 2\varepsilon)$ neighbors in $W$, while any $v \notin C_i$ will have at most $(q + 2\varepsilon)$ neighbors in $W$.  Thus, we can use number of vertices in $W$ to distinguish between vertices in $C_i$ and vertices in other clusters.
\end{enumerate}
See~\cite[Lemmas~19-20]{cole2017simple} for details.  The reason we cannot directly apply a Hoeffding argument to $W$ is that $W$ depends on the randomness of the instance $A$, thus the number of neighbors a vertex has in $W$ is not the sum of $|W|$ fixed random variables.

To generalize to hypergraphs with $d > 2$, an obvious analogue of the notion of number of neighbors a vertex $u$ has in a vertex set $W$ is to define the random variable
\[N_{u, W} := \sum_{e : u \in e, e \setminus \{u\} \subseteq W}T_e,\]
i.e.\ the number of hyperedges containing $u$ and $d - 1$ vertices from $W$.  When $d = 2$ this is simply the number of neighbors $u$ has in $W$ (see Figure \ref{fig:NuW} for the case $d=3$).  We get the following analogue to~\cite[Lemma~19]{cole2017simple}.

\begin{figure}
\centering
\includegraphics[width=5cm]{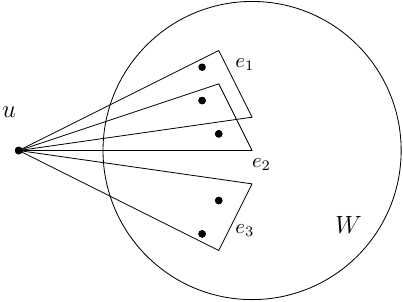}	
\caption{When $d=3$, $N_{u,W}$ is the number of hyperedges containing $u$ and $2$ vertices in $W$. In the figure above, $N_{u,W}=3$.}\label{fig:NuW}
\end{figure}

\begin{lemma}\label{lemma:hoeffding}
Consider cluster $C_i$ and vertex $u \in [n]$, and let $\varepsilon > 0$.  If $u \in C_i$, then for $n$ sufficiently large and $d=o(s)$,
\begin{equation}\label{jinCi}
N_{u, C_i} \geq (p-\varepsilon){s \choose d - 1}
\end{equation}
with probability $\geq 1 -  \exp\left(-\varepsilon^2{s-1 \choose d-1}\right)$, and if $u \notin C_i$, then
\begin{equation}\label{jnotinCi}
N_{u, C_i} \leq (q+\varepsilon){s \choose d - 1}
\end{equation}
with probability $\geq 1 -  \exp\left(-\varepsilon^2{s-1 \choose d-1}\right)$.
\end{lemma}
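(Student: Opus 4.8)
The plan is to show that $N_{u,C_i}$ is a sum of independent $\{0,1\}$ random variables and then apply a Hoeffding (or Chernoff) bound. First I would observe that $N_{u,C_i} = \sum_{e : u \in e,\, e \setminus \{u\} \subseteq C_i} T_e$, where the sum ranges over all $d$-subsets $e$ of $[n]$ containing $u$ with the remaining $d-1$ vertices lying in $C_i$. Since $C_i$ has $s$ vertices, the number of such hyperedges $e$ is ${s \choose d-1}$ if $u \notin C_i$ (choose $d-1$ vertices of $C_i$) and ${s-1 \choose d-1}$ if $u \in C_i$ (choose $d-1$ of the remaining $s-1$ vertices of $C_i$). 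Crucially, these $T_e$ are \emph{distinct} entries of the adjacency tensor — distinct hyperedges — hence mutually independent, so $N_{u,C_i}$ is genuinely a sum of independent Bernoulli variables, unlike $N_{u,W}$.

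Next I would compute the expectation. If $u \in C_i$, every hyperedge $e$ in the sum lies entirely within $C_i$, so $\expval T_e = p$ and $\expval N_{u,C_i} = p{s-1 \choose d-1}$. If $u \notin C_i$, every such $e$ contains a vertex ($u$) outside $C_i$, so $\expval T_e = q$ and $\expval N_{u,C_i} = q{s \choose d-1}$. Then I would apply Hoeffding's inequality: for a sum $S$ of $m$ independent $[0,1]$ variables, $\prb(|S - \expval S| \geq t) \leq 2\exp(-2t^2/m)$, or the one-sided version $\prb(S \leq \expval S - t) \leq \exp(-2t^2/m)$ and similarly for the upper tail. In the case $u \in C_i$, take $m = {s-1 \choose d-1}$ and $t = \varepsilon{s-1\choose d-1}$, giving a lower-tail bound of $\exp(-2\varepsilon^2{s-1\choose d-1})$, so that $N_{u,C_i} \geq (p-\varepsilon){s-1 \choose d-1}$ with the stated probability. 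In the case $u \notin C_i$, take $m = {s \choose d-1}$ and $t = \varepsilon {s \choose d-1}$ to get $N_{u,C_i} \leq (q+\varepsilon){s\choose d-1}$ with probability $\geq 1 - \exp(-2\varepsilon^2{s\choose d-1})$, which dominates $1 - \exp(-\varepsilon^2{s-1\choose d-1})$.

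The one subtlety is that the lemma states the bounds with ${s \choose d-1}$ rather than the exact counts ${s-1 \choose d-1}$ and ${s \choose d-1}$. For $u \in C_i$ this requires replacing ${s-1 \choose d-1}$ by the slightly larger ${s \choose d-1}$ in the bound $(p-\varepsilon){s-1\choose d-1}$; since $(p-\varepsilon){s-1\choose d-1} \geq (p-\varepsilon'){s\choose d-1}$ fails in general, one instead argues that the ratio ${s-1\choose d-1}/{s\choose d-1} = (s-d+1)/s = 1 - (d-1)/s \to 1$ when $d = o(s)$, so for $n$ large enough the loss is absorbed into the $\varepsilon$ (e.g.\ run the Hoeffding bound with $\varepsilon/2$ and use the hypothesis $d = o(s)$ to convert the factor). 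This slack — trading the true count ${s-1\choose d-1}$ for ${s\choose d-1}$ at the cost of a $(1-o(1))$ factor — is exactly where the hypotheses ``$d = o(s)$'' and ``$n$ sufficiently large'' are used, and it is the only point requiring any care; the rest is a direct invocation of independence plus Hoeffding.
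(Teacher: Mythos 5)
Your proposal is correct and follows essentially the same route as the paper: identify that $N_{u,C_i}$ is a sum of independent Bernoullis (${s-1 \choose d-1}$ of them with mean $p$ when $u\in C_i$, ${s\choose d-1}$ of them with mean $q$ when $u\notin C_i$), apply Hoeffding, and then absorb the ${s-1\choose d-1}$ vs.\ ${s\choose d-1}$ mismatch using $d=o(s)$ and $n$ large. One small quantitative slip in your parenthetical: running Hoeffding at threshold $\varepsilon/2$ gives an exponent of $2(\varepsilon/2)^2{s-1\choose d-1}=\tfrac12\varepsilon^2{s-1\choose d-1}$, which falls short of the stated $\varepsilon^2{s-1\choose d-1}$. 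The paper instead solves for the exact threshold $\delta=\frac{\varepsilon s-(d-1)p}{s-d+1}$ that makes $(p-\delta){s-1\choose d-1}=(p-\varepsilon){s\choose d-1}$, and notes that $\delta\to\varepsilon$ as $d/s\to 0$, so $2\delta^2\geq \varepsilon^2$ for $n$ sufficiently large; equivalently you could run your argument with threshold $\varepsilon/\sqrt2$ rather than $\varepsilon/2$. This is a detail, not a gap in the approach.
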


\begin{proof}
 For $u \in C_i$, $N_{u, C_i}$ is the sum of ${s - 1 \choose d - 1}$ independent Bernoulli random variables with expectation $p$, so Hoeffding's inequality yields
 \begin{align*}
 	\mathbb P\left(N_{u,C_i}\leq (p-\varepsilon){s  \choose d - 1} \right)
 	&= \mathbb P\left(N_{u,C_i}\leq \left(p-\frac{\varepsilon s-(d-1)p}{s-d+1}\right){s-1 \choose d-1}  \right)\\
 	&\leq \exp\left(-2\left(\frac{\varepsilon s-(d-1)p}{s-d+1}\right)^2{s-1 \choose d-1}\right )\\
 	&\leq \exp\left(-\varepsilon^2{s-1 \choose d-1}\right)
 \end{align*}
 Note that the last inequality holds for $d=o(s)$ and $n$ sufficiently large.
 
  For $v \notin C_i$, $N_{v, C_i}$ is the sum of ${s \choose d - 1}$ independent Bernoulli random variables with expectation $q$. So by Hoeffding's inequality again
  \begin{align*}
  	\mathbb P\left(N_{u,C_i}\geq  (q+\varepsilon){s  \choose d - 1} \right)&\leq \exp \left(-2\varepsilon^2 {s-1 \choose d-1}\right)\leq \exp\left(-\varepsilon^2 {s-1 \choose d-1}\right).
  \end{align*}
\end{proof}

The difficulty is in going from $N_{u, C_i}$ to $N_{u, W}$, where $W$ is a set such that $|W \triangle C_i| \leq \varepsilon s$.  We have the following estimate for $N_{u,W}$. 
\begin{lemma}\label{lemma:NuW}
	Let $W\subset [n]$ such that $|W|=s$ and $|W\cap C_i|\geq  (1-6\varepsilon)s$ for some $i\in[k]$. Then for $\epsilon<\frac{1}{16d}$, $d=o(s)$, and $n$ sufficiently large, we have the following:
	\begin{enumerate}
		\item If $j\in C_i$ satisfies \eqref{jinCi}, then
		$\displaystyle N_{j,W}\geq (p-16d\varepsilon){s \choose d-1},
		$
		\item If $j\not\in C_i$ satisfies \eqref{jnotinCi}, then
		$\displaystyle N_{j,W}\leq (q+16d \varepsilon){s \choose d-1}.
		$
	\end{enumerate}
\end{lemma}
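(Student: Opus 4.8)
\textbf{Proof proposal for Lemma~\ref{lemma:NuW}.}

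The plan is to compare $N_{j,W}$ with $N_{j,C_i}$ by bounding the contribution of hyperedges that "witness the difference" between $W$ and $C_i$. Write $W = (W\cap C_i)\cup (W\setminus C_i)$ and note that, since $|W|=s$ and $|W\cap C_i|\geq (1-6\varepsilon)s$, we have $|W\setminus C_i|\leq 6\varepsilon s$ and $|C_i\setminus W|\leq 6\varepsilon s$ as well. For a fixed vertex $j$, the quantity $N_{j,W}$ counts hyperedges $e\ni j$ with $e\setminus\{j\}\subseteq W$, while $N_{j,C_i}$ counts those with $e\setminus\{j\}\subseteq C_i$. The symmetric difference between these two collections of hyperedges consists of hyperedges $e\ni j$ whose other $d-1$ vertices all lie in $W\cup C_i$ but at least one lies in $W\triangle C_i$. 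So
\[
\bigl|N_{j,W}-N_{j,C_i}\bigr|\ \leq\ \#\{e:\ j\in e,\ e\setminus\{j\}\subseteq W\cup C_i,\ (e\setminus\{j\})\cap(W\triangle C_i)\neq\emptyset\}.
\]

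First I would bound this combinatorial error term deterministically. The number of such hyperedges is at most $(d-1)$ times the number of ways to pick one vertex from $W\triangle C_i$ (there are $\leq 12\varepsilon s$ of them, using $|W\triangle C_i| = |W\setminus C_i| + |C_i\setminus W| \leq 12\varepsilon s$) and then $d-2$ more vertices from a set of size at most $|W\cup C_i|\leq (1+6\varepsilon)s \leq 2s$: this gives the crude bound
\[
\bigl|N_{j,W}-N_{j,C_i}\bigr|\ \leq\ (d-1)\cdot 12\varepsilon s\cdot \binom{2s}{d-2}.
\]
The next step is to turn this into a multiple of $\binom{s}{d-1}$. Using $\binom{2s}{d-2}\leq \frac{(2s)^{d-2}}{(d-2)!}$ and $\binom{s}{d-1}\geq \frac{(s-d+1)^{d-1}}{(d-1)!}\geq \tfrac12\cdot\frac{s^{d-1}}{(d-1)!}$ for $n$ large (here $d=o(s)$ ensures $s-d+1\geq s/2^{1/(d-1)}\cdot(1-o(1))$, more carefully $(s-d+1)^{d-1}\geq s^{d-1}(1-(d-1)/s)^{d-1}\to s^{d-1}$), one obtains $\binom{2s}{d-2}/\binom{s}{d-1}\leq c\cdot 2^{d-2}(d-1)/s$ for an absolute constant $c$. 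Multiplying through, the ratio $|N_{j,W}-N_{j,C_i}|/\binom{s}{d-1}$ is at most something like $12\varepsilon(d-1)^2 c\, 2^{d-2}$. This is \emph{not} bounded by $16d\varepsilon$, so the crude counting of $W\cup C_i$ as a size-$2s$ set is too lossy; I expect this to be the main obstacle, and the fix is to count more carefully: the error hyperedges have their remaining $d-2$ vertices in $W\cup C_i$, but in fact we should split on how many lie in $W\triangle C_i$ versus in $W\cap C_i$ (a set of size $\leq s$). Counting hyperedges with exactly $t\geq 1$ vertices in $W\triangle C_i$ and $d-1-t$ in $W\cap C_i$ gives $\binom{12\varepsilon s}{t}\binom{s}{d-1-t}$, and summing the dominant $t=1$ term yields $(d-1)\cdot 12\varepsilon s\cdot\binom{s}{d-2}$, and now $\binom{s}{d-2}/\binom{s}{d-1}=(d-1)/(s-d+2)\leq 2(d-1)/s$ for $n$ large, so $|N_{j,W}-N_{j,C_i}|\leq (d-1)\cdot 12\varepsilon s\cdot\binom{s}{d-2}\leq 12\varepsilon(d-1)\binom{s}{d-1}\cdot\frac{2(d-1)s}{s}$; one still has to absorb the extra $(d-1)$ factor, which is where the constraint $\varepsilon<\frac1{16d}$ and the clean target $16d\varepsilon$ come from — the higher-order terms $t\geq 2$ contribute a geometric-type series in $12\varepsilon(d-1)$, summable since $12\varepsilon(d-1)<1$, absorbing into the constant.

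Finally I would assemble the two cases. For $j\in C_i$ satisfying \eqref{jinCi}, $N_{j,W}\geq N_{j,C_i}-|N_{j,W}-N_{j,C_i}|\geq (p-\varepsilon)\binom{s}{d-1}-c'd\varepsilon\binom{s}{d-1}\geq (p-16d\varepsilon)\binom{s}{d-1}$ once the constant $c'$ from the counting argument is checked to satisfy $c'd+1\leq 16d$, i.e.\ $c'\leq 16-1/d$, which the careful version above delivers. For $j\notin C_i$ satisfying \eqref{jnotinCi}, symmetrically $N_{j,W}\leq N_{j,C_i}+|N_{j,W}-N_{j,C_i}|\leq (q+\varepsilon)\binom{s}{d-1}+c'd\varepsilon\binom{s}{d-1}\leq (q+16d\varepsilon)\binom{s}{d-1}$. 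Throughout, $d=o(s)$ and "$n$ sufficiently large" are used only to control the ratios $\binom{s}{d-2}/\binom{s}{d-1}$ and $\binom{s}{d-1}$ vs.\ $s^{d-1}/(d-1)!$ up to factors tending to $1$; no probabilistic input is needed here beyond the assumed deterministic bounds \eqref{jinCi} and \eqref{jnotinCi}, so this lemma is entirely a deterministic counting statement conditioned on the good event from Lemma~\ref{lemma:hoeffding}.
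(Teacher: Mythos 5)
Your overall strategy is the same as the paper's: compare $N_{j,W}$ to $N_{j,C_i}$ by bounding the number of ``error'' hyperedges, counting them by how many of their vertices lie outside the relevant set, and summing a geometric-type series. But there is a concrete quantitative gap that breaks the claimed constant $16d\varepsilon$.

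You bound the two-sided quantity $|N_{j,W}-N_{j,C_i}|$ in terms of the full symmetric difference $|W\triangle C_i|\leq 12\varepsilon s$, whereas the paper sandwiches $N_{j,W\cap C_i}\leq N_{j,W}\leq N_{j,C_i\cup W}$ and only ever pays for a \emph{one-sided} difference: $|C_i\setminus W|\leq 6\varepsilon s$ when $j\in C_i$, and $|W\setminus C_i|\leq 6\varepsilon s$ when $j\notin C_i$. The resulting error term for the paper is $\sum_{m\geq1}\binom{\lceil 6\varepsilon s\rceil}{m}\binom{s}{d-1-m}$, whose geometric series has ratio roughly $7\varepsilon d<7/16$ under $\varepsilon<\tfrac1{16d}$, so $\frac{r}{1-r}<2r$ and the whole thing is $\leq 15d\varepsilon\binom{s}{d-1}$; adding the $\varepsilon$ from \eqref{jinCi} gives the advertised $16d\varepsilon$. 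With your $12\varepsilon s$ the ratio becomes roughly $13\varepsilon d$, which can be as large as $13/16$, so $\frac{r}{1-r}$ blows up to about $13/3$ rather than being $O(d\varepsilon)$; you would need $\varepsilon\lesssim\tfrac1{100d}$, not $\varepsilon<\tfrac1{16d}$, to land at $16d\varepsilon$. Your remark that the ``careful version delivers'' $c'\leq 16-1/d$ is exactly the step that fails. The fix is to observe that $N_{j,W}\geq N_{j,W\cap C_i}=N_{j,C_i}-\tilde N_{j,C_i\setminus W}$ (for $j\in C_i$) and $N_{j,W}\leq N_{j,C_i\cup W}=N_{j,C_i}+\tilde N_{j,W\setminus C_i}$ (for $j\notin C_i$), so that in each direction only one half of the symmetric difference contributes, halving the base of your geometric series and recovering the stated constant. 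Everything else in your outline --- the split on how many of the $d-1$ vertices land in the bad set, the use of $d=o(s)$ to control binomial ratios, and the observation that the lemma is purely deterministic given \eqref{jinCi}/\eqref{jnotinCi} --- matches the paper.
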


	\begin{proof}
	Assume $j\in C_i$ and $j$ satisfies \eqref{jinCi}. As $|C_i|=s$, we have $|C_i\setminus W|\leq 6\varepsilon s$.
	
	 Let $\tilde{N}_{j,C_i\setminus W}$ be the number of hyperedges containing $j$ and $d-1$ vertices from $C_i$, among which at least one vertices from $C_i\setminus W$. We then have 
	 \begin{align*}
	 	N_{j,W}&\geq N_{j, W\cap C_i}=N_{j,C_i}- \tilde{N}_{j,C_i\setminus W}\\
	 	         &\geq  (p-\varepsilon){s\choose d-1}-\sum_{m=1}^{d-1}{\lceil 6\varepsilon s\rceil \choose m}{s \choose d-1-m}.
	 \end{align*}
	 In the inequality above, we bound $\tilde{N}_{j,C_i\setminus W}$ by a deterministic counting argument, i.e. we count all possible hyperedges that include a vertex $j$, with $m$ vertices from $C_i\setminus W$ and remaining $(d-1-m)$ vertices from $C_i$ for $1\leq m\leq d-1$.

	 Note that we can choose $\displaystyle \varepsilon<\frac{1}{16d}$ then for $n$ sufficiently large, we have 
	 \begin{align*}
	 	\sum_{m=1}^{d-2}{\lceil 6\varepsilon s\rceil \choose m}{s \choose d-1-m}
	 	&\leq \sum_{m=1}^{d-2}{  7\varepsilon s \choose m}{s \choose d-1-m}\\
	 	&\leq {s\choose d-1}\sum_{m=1}^{d-1}(7\varepsilon s)^m\frac{(s-d+1)!(d-1)!}{(s-d+1+m)!(d-1-m)!}\\
	 	&\leq {s \choose d-1}\sum_{m=1}^{d-1}\left(\frac{7\varepsilon s d}{s-d+2}\right)^m\\
	 	&\leq {s \choose d-1}\frac{14\varepsilon s d}{s-d+2}\leq 15d\varepsilon {s \choose d-1}
	 \end{align*}
	 
	 So we have 
	 \begin{align*}
	 	N_{j,W}\geq (p-16d\varepsilon){s\choose d-1}.
	 \end{align*}
	 
If $j\not\in C_i$, let $\tilde{N}_{j,W\setminus C_i}$ be the number of hyperedges containing $j$ and $d-1$ vertices from $W$, among which at least one vertices from $W\setminus C_i$. Recall $|W\setminus C_i|\leq 6\varepsilon s$.
\begin{align*}
N_{j,W}&\leq N_{j,C_i\cup W}=N_{j,C_i}+\tilde{N}_{j,W\setminus C_i}\\
&\leq (q+\varepsilon){s\choose{d-1}}+ 	\sum_{m=1}^{d-1}{\lceil 6\varepsilon s\rceil \choose m}{s \choose d-1-m}\\
&\leq (q+\varepsilon){s\choose{d-1}}+ 	\sum_{m=1}^{d-1}{ 7\varepsilon s \choose m}{s \choose d-1-m}\\
&\leq (q+16d\varepsilon){s\choose d-1}.
\end{align*}	
\end{proof}

Lemma \ref{lemma:NuW} gives us a way to distinguish vertices $j\in C_i$ and $j\not\in C_i$ provided
$p-16d\varepsilon>q+16d\varepsilon$.  

\section{Proof of algorithm's correctness}\label{sec:delandrec}

We now have all the necessary pieces to prove the correctness of Algorithm~\ref{alg} (Theorem~\ref{mainthm}).  The proof is roughly the same as that of~\cite[Theorem~4]{cole2017simple}.

\subsection{Proof of correctness of first iteration}

Lemmas~\ref{lemma:largenorm=>approxcluster}-\ref{lemma:NuW} above prove that Steps~\ref{step:adjmatrix}-\ref{step:exactrec} of Algorithm~\ref{alg} correctly recover a single cluster in the first iteration.

\begin{theorem}\label{thm:1stitr}
Assume that~\eqref{eq:concentration} holds and that for $i = 1, \ldots, k$, \eqref{jinCi} holds for all $u \in C_i$ and~\eqref{jnotinCi} holds for all $u \notin C_i$ with  $$\displaystyle \frac{6d\sqrt{d{n \choose d - 1}}}{{s - 2 \choose d - 2}(p - q)s - 12d\sqrt{d{n \choose d - 1}}}\leq \epsilon\leq \frac{p - q}{32d}.$$  Then Steps~\ref{step:adjmatrix}-\ref{step:exactrec} of Algorithm~\ref{alg} exactly recover a cluster $C_i$ in the first iteration. 

\end{theorem}

\begin{proof}
By Lemma~\ref{lemma:existsgood}, the set $W$ constructed in Step~\ref{step:approxcluster} has $||P_k(A)\1_W||_2 \geq (1 - 2\varepsilon)\sqrt s$.  By Lemma~\ref{lemma:largenorm=>approxcluster} (noting that $\epsilon \leq \frac{p - q}{32d} \leq \frac1{12}$), $|W \cap C_i| \geq (1 - 6\varepsilon)s$ for some $i$.  And by Lemma~\ref{lemma:NuW}, $N_{u, W} \geq (1 - 16\varepsilon)s$ for all $u \in C_i$, while $N_{u, W} \leq (q + 16d\varepsilon)s$ for all $u \notin C_i$.  If $\displaystyle \varepsilon < \frac{p - q}{32d}$, then $(p - 16d\varepsilon)s > (q + 16d\varepsilon)s$.  Thus, when we take the $s$ vertices $u$ with highest $N_{u, W}$ in Step~\ref{step:exactrec}, for each of them we have
\[N_{u, W} \geq (p - 16d\varepsilon)s > (q + 16d\varepsilon)s,\]
so none of them could possibly come from $[n] \setminus C_i$.  Therefore, the set $C$ constructed in Step~\ref{step:exactrec} must be equal to $C_i$.
\end{proof}

\subsection{The ``delete and repeat'' step}\label{sec:delete}

The difficulty with proving the success of Algorithm~\ref{alg} beyond the first iteration is that the iterations cannot be handled independently: whether or not the $t$-th iteration succeeds determines which vertices will be left in the $(t + 1)$-st iteration, which certainly affects whether or not the $(t + 1)$-st iteration succeeds.  However, notice that there is nothing probabilistic in the statement or proof of Theorem~\ref{thm:1stitr}: if certain conditions are true, then the first iteration of Algorithm~\ref{alg} will definitely recover a cluster.  In fact, the only probabilistic statements thus far are in Theorem~\ref{concentration} and Lemma~\ref{lemma:hoeffding}.  Similar to the analysis in~\cite{cole2017simple, cole2018recovering}, we will show that if certain (exponentially many) conditions are met, then \emph{all} iterations of Algorithm~\ref{alg} will succeed.  We will then show that all of these events occur simultaneously w.h.p.; hence, Algorithm~\ref{alg} recovers all clusters w.h.p.

We begin by introducing some terminology:
\begin{definition}[Cluster subhypergraph, cluster subtensor]
We define a cluster subhypergraph to be a subhypergraph of $H$ induced by a subset of the clusters $C_1, \ldots, C_k$.  Similarly, we define a cluster subtensor to be the principal subtensor of $T$ formed by restricting the indices to a subset of the clusters.  For $J \subseteq [k]$, we denote by $H^{(J)}$ the subhypergraph of $H$ induced by $\bigcup_{j \in J}C_j$, and we denote by $T^{(J)}$ the principal subtensor of $T$ with indices restricted to $\bigcup_{j \in J}C_j$.
\end{definition}

We now define two types of events on our probability space $H(n, d, \CC, p, q)$:
\begin{itemize}
\item	\emph{Spectral events} -- For $J \subseteq [k]$, let $E_J$ be the event that 
\[\|B - \expval B\|_2 \leq 6d\sqrt{d{m \choose d - 1}},\]
where $B$ is the adjacency matrix of $H^{(J)}$ and $m = s|J|$ the number of vertices in $H^{(J)}$.  Note that $B$ is \emph{not} simply a submatrix of $A$, as only a subset of the edges of $H$ are counted when computing the entries of $B$.
\item	\emph{Degree events} -- For $1 \leq i \leq k, 1 \leq u \leq n$, let $D_{i, u}$ be the event that $\displaystyle N_{u, C_i} \geq (p - \varepsilon){s \choose d - 1}$ if $u \in C_i$, or the event that $\displaystyle N_{u, C_i} \leq (q + \varepsilon){s \choose d - 1}$ if $u \notin C_i$.  These are the events that each vertex $u$ has approximately the correct value of $N_{u, C_i}$ for each cluster $C_i$.
\end{itemize}

Observe that there are $2^k$ spectral events and $nk$ degree events. We will now show that if all of these events occur, then Algorithm~\ref{alg} will definitely succeed in recovering all clusters.  Again, there is nothing probabilistic in this theorem or its proof.

\begin{lemma}\label{lem:allevents=>success}
Assume that $E_J$ holds for all $J \subseteq [k]$ and $D_{i, u}$ holds for all $i \in [k], u \in [n]$ with \begin{equation}\label{eq:epsilonub}
 \frac{6d\sqrt{d{n \choose d - 1}}}{{s - 2 \choose d - 2}(p - q)s - 12d\sqrt{d{n \choose d - 1}}}\leq \epsilon\leq \frac{p - q}{32d}.
\end{equation}  Then Algorithm~\ref{alg} recovers $C_1, \ldots, C_k$ exactly.

\end{lemma}

\begin{proof}[Proof sketch]
We omit the full proof as it is analogous to the proof in~\cite[Section~7.3]{cole2017simple}.  Essentially, we prove by induction that the $t$-th iteration succeeds for $t = 1, \ldots, k$.  If the 1st through $t$th iterations succeed, then the $(t + 1)$-st iteration receives as input a cluster sub-hypergraph $H^{(J)}$, for some $J \subseteq [k]$.  Hence, $E_J$ and $D_{i, u}$ for $i \in J$, $u \in \bigcup_{j \in J}C_j$ ensure the success of the $(t + 1)$-st iteration.  Note that if there are $m = |J|s$ vertices remaining, then Theorem~\ref{thm:projdiff} requires that
\[\varepsilon \geq \frac{6d\sqrt{d{m \choose d - 1}}}{{s - 2 \choose d - 2}(p - q)s - 12d\sqrt{d{m \choose d - 1}}},\]
but the this bound is largest when $m = n$; thus, the condition~\eqref{eq:epsilonub} is sufficient for all iterations.
\end{proof}

Finally, we show that all of the $E_J$ and $D_{i, u}$ hold simultaneously w.h.p.

\begin{lemma}\label{lem:alleventswhp}
$E_J$ and $D_{i, u}$ hold simultaneously for all $J \subseteq [k], i \in [k], u \in [n]$ with probability $\geq 1 - 2^k \cdot \exp(-s) - nk \cdot \exp\left(-\varepsilon^2{s - 1 \choose d - 1}\right)$ for any $\epsilon$ satisfying condition ~\eqref{eq:epsilonub}.
\end{lemma}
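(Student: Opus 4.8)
The plan is to apply a union bound over all $2^k$ spectral events $E_J$ and all $nk$ degree events $D_{i,u}$, controlling the failure probability of each individually by the results already established. For a fixed $J \subseteq [k]$, the subhypergraph $H^{(J)}$ is itself distributed as an HSBM $H(m, d, \CC', p, q)$ with $m = s|J|$ vertices and $|J|$ clusters of size $s$, and its adjacency matrix $B$ is exactly the adjacency matrix of that smaller model (here I would emphasize the point already flagged in the text: $B$ is built only from hyperedges internal to $H^{(J)}$, so it is genuinely the adjacency matrix of $H^{(J)}$ and not a submatrix of $A$ — but this is precisely why Theorem~\ref{concentration} applies to it verbatim). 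Hence Theorem~\ref{concentration}, applied with $n$ replaced by $m$, gives $\prb[E_J^c] \leq e^{-m} \leq e^{-s}$, since $m = s|J| \geq s$. Summing over the $2^k$ choices of $J$ contributes $2^k \cdot e^{-s}$ to the failure probability.

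Next, for the degree events: fix $i \in [k]$ and $u \in [n]$. The event $D_{i,u}$ is exactly the event appearing in Lemma~\ref{lemma:hoeffding} — either \eqref{jinCi} if $u \in C_i$ or \eqref{jnotinCi} if $u \notin C_i$ — and that lemma gives $\prb[D_{i,u}^c] \leq \exp\!\left(-\varepsilon^2 \binom{s-1}{d-1}\right)$ in either case, valid for $n$ large enough and $d = o(s)$ (both of which are in force under~\eqref{eq:epsilonub}, since the lower bound on $\varepsilon$ there together with $p,q$ constant forces $s \gtrsim \sqrt{n}$, hence $d = o(s)$). There are $nk$ such pairs $(i,u)$, contributing $nk \cdot \exp\!\left(-\varepsilon^2\binom{s-1}{d-1}\right)$.

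Finally, a single union bound over all these events gives
\[
\prb\Big[\bigcap_{J} E_J \cap \bigcap_{i,u} D_{i,u}\Big] \geq 1 - \sum_{J \subseteq [k]} \prb[E_J^c] - \sum_{i \in [k], u \in [n]} \prb[D_{i,u}^c] \geq 1 - 2^k e^{-s} - nk\exp\!\left(-\varepsilon^2\binom{s-1}{d-1}\right),
\]
as claimed. I do not expect any serious obstacle here; the one point requiring a word of care is the observation that each $H^{(J)}$ is itself a bona fide HSBM so that Theorem~\ref{concentration} can be invoked as a black box with the appropriate parameter substitution, and that its hypotheses (notably $d = o(m)$, which follows from $d = o(s)$ and $m \geq s$) remain satisfied for every $J$. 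Everything else is a routine union bound, and no independence between the $E_J$'s or between spectral and degree events is needed.
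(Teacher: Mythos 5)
Your proposal is correct and follows essentially the same route as the paper's own proof: observe that each $H^{(J)}$ is itself an HSBM on $m = s|J|$ vertices so that Theorem~\ref{concentration} bounds $\prb[\overline{E_J}]$ by $e^{-m} \leq e^{-s}$, invoke Lemma~\ref{lemma:hoeffding} to bound $\prb[\overline{D_{i,u}}]$, and finish with a union bound over all $2^k + nk$ events. The extra remarks you add (that $B$ is the genuine adjacency matrix of $H^{(J)}$ rather than a submatrix of $A$, and that $d = o(m)$ is inherited from $d = o(s)$) are accurate and simply make explicit what the paper leaves implicit.
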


\begin{proof}
For any \emph{fixed} $J \subseteq [k]$, $H^{(J)}$ is simply an instance of a smaller HSBM; it has distribution $H(|J|s, d, \bigcup_{j \in J}C_j, p, q)$. Thus,
\[\prb\left(\overline{ E_J}\right) \leq \exp(-|J|s) \leq \exp(-s)\]
by Theorem~\ref{concentration}.  And for any $i \in [k], u \in [n]$,
\[\prb \left(\overline{ D_{i, u}}\right) \leq \exp\left(-\varepsilon^2{s - 1 \choose d - 1}\right)\]
by Lemma~\ref{lemma:hoeffding}.  The proof is completed by taking a union bound over all $J \subseteq [k], i \in [k], u \in [n]$.
\end{proof}

Theorem~\ref{mainthm} follows as an immediate corollary to Lemmas~\ref{lem:allevents=>success} and~\ref{lem:alleventswhp}.

\section{Lower bounds for SBM recovery}\label{sec:lowerbounds}

There have been many results which show that, for a fixed number of blocks, (H)SBM recovery becomes impossible if the edge probabilities are below a certain threshold.  For exact recovery of HSBMs with two blocks, it was shown in  \cite{lin2017fundamental,chien2018minimax,chien2018community} that the phase transition from impossible to possible occurs in the regime of logarithmic average degree by analyzing the minimax risk, and the exact threshold was given in \cite{kim2018stochastic}, by a generalization of the techniques in \cite{abbe2016exact} for graph SBMs. For the detection problem of HSBMs in the bounded expected degree regime, a phase transition was conjectured in \cite{angelini2015spectral} based on belief propagation and the non-backtracking operator. Very recently, a spectral method based on the self-avoiding walk was proved to achieve the conjectured threshold \cite{pal2019community}.  

For graph SBMs, a phase transition for the detection problem  in the bounded expected degree regime with finite many blocks, called Kesten-Stigum threshold, was conjectured in \cite{decelle2011asymptotic} and then proved in \cite{mossel2018proof,massoulie2014community,mossel2015reconstruction} for the 2-block case. Below the Kesten-Stigum threshold, no algorithms (even with exponential running time) will solve the detection problem, while above the threshold detection is not only possible but can be done in polynomial time (see \cite{abbe2018community} for further details).  The phase transition behavior  for the exact recovery problem with two blocks was proved in \cite{abbe2016exact}. Above the threshold, there are polynomial algorithms that solve the problem \cite{abbe2016exact,mossel2016consistency,abbe2017entrywise}.  For general SBMs with finite or a growing number of blocks, the minimax lower bounds were given in \cite{zhang2016minimax,chen2016statistical,jalali2016exploiting}.

Relatively little is known in the dense regime, even in the graph case, and most results focus on the related \emph{planted clique} problem~\cite{alon1998finding,jerrum1992large} rather than SBM recovery (a.k.a.\ planted partition).  It is generally believed that these problems become intractable when the size of the clique/clusters is $o(\sqrt n)$.  In this case, one would ideally like to prove that these problems are distNP-complete (where distP and distNP are distriubtional analogues of P and NP~\cite{levin1986average}).  However, showing the existence of a ``natural'' distNP-complete problem is itself a long-outstanding problem in complexity theory~\cite{arora2009computational}.

Instead, various authors have shown that certain types of algorithms will provably fail if the size of the planted clique/clusters is too small.  The first such result dates back to the original paper introducing the planted clique problem~\cite{jerrum1992large}, in which Jerrum showed that the Metropolis-Hastings algorithm fails to recover planted cliques of size $n^{1/2 - \epsilon}$ for any $\epsilon > 0$.  It was subsequently shown in~\cite{chen2016statistical, feige2000finding} that certain optimization-based approaches also fail in this regime, while Feldman et al.\ showed that \emph{statistical algorithms} also run into the same barrier~\cite{feldman2017statistical}. (A statistical algorithm is an algorithm which, instead of receiving samples from a distribution, can query an oracle for statistics on the distribution within some tolerance.  Such algorithms can be used to simulate many standard algorithms on randomized input.)  

Extending these results for planted clique to dense SBM and HSBM recovery appears to be a promising direction for future work.  In the $d$-uniform hypergraph case, it is unclear whether the barrier should be $\sqrt n$, $n^{1/d}$ or something else.  It seems plausible that the barrier could be less than $\sqrt n$ for $d > 2$, since $d$-uniform HSBMs have ${n \choose d}$ independent random variables in play compared with only ${n \choose 2}$ in the graph case, and thus we may expect certain random variables (e.g.\ degrees) to be more tightly concentrated about their expectations.  However, it seems doubtful that a spectral algorithm could do better than $\sqrt n$ using only the adjacency matrix (see Definition~\ref{adj}), as this is the error term introduced by concentration results for the spectral norm of a random symmetric $n \times n$ matrix (see, e.g.,~\cite{vershynin2018high}); to break the $\sqrt n$ barrier, we suspect that one must use the spectral properties of the \emph{adjacency tensor} and not simply reduce to the adjacency matrix.

While proving lower bounds for \emph{efficient} HSBM recovery appears to be difficult, one can readily prove that exact recovery is impossible for \emph{any} algorithm, regardless of running time, if the cluster size is small enough (i.e., an information theoretic lower bound). We will follow the proof ideas from \cite{chen2016statistical,jalali2015relative} in the graph SBM cases to prove an information theoretic lower bound for HSBMs.

%We represent the true clusters in a HSBM as a cluster matrix $Y^*\in {0,1}^{n\times n}$, where $Y^{*}_{ii}=0$ for $i\in [n]$ and $Y^*_{ij}=1$ if and only if $i,j$ are in the same true cluster. Then we have the rank of $Y*$ is $k$ and the task for exact recovery in HSBM is equivalent to recovering $Y^*$ from a random hypergraph $H$ sampled from the model.
Recall the definition of incidence matrix of a partition \eqref{eq:incidence}. Conversely, let $\mathcal C(Y)$ denote the partition of $[n]$ whose incidence matrix is $Y$.
Let $\mathcal Y$ be the set of all incidence matrices corresponding to  partitions of $[n]$ into $k$ parts of size $s$:
\begin{align}
\mathcal Y :=\{ Y: \exists \text{  $k$ clusters of size $s$ such that $Y$ is the corresponding incidence matrix} \}.	
\end{align}
In addition, recall that the Kullback-Leibler divergence between two Bernoulli random variables with means $u$ and $v$ is given by 
\begin{align}
D(u\| v)=u\log\frac{u}{v}+(1-u)\log \frac{1-u}{1-v}.	
\end{align}
We are now able to state our theorem for the lower bound  on the minimax error probability of recovering $Y^*$.

\begin{theorem}\label{thm:lowerbound}
If $128 \leq s \leq n / 2$ and 
\begin{equation}\label{eq:KL}
{s - 1 \choose d - 1}\max\{D(p \| q), D(q \| p)\} \leq \frac1{24}\ln(n - s),
\end{equation}
then
\[\inf_{f}\sup_{\mathcal Y^* \in \mathcal Y}\prb(f(T) \neq Y^*) \geq \frac12.\]
The infimum is taken over all measurable functions $f : \mathbf S^d(\{0, 1\}^n) \to \mathcal Y$, where $\mathbf S^d(\{0, 1\}^n)$ denotes the set of symmetric $d$-tensors in $\{0, 1\}^{n^d}$, and the probability is taken over a random adjacency tensor $T$ sampled from the HSBM distribution corresponding to $Y^*$, i.e. $T \sim H(n, \mathcal C(Y^*), p, q, d)$.
\end{theorem}

From this theorem we know that when $p,q$ are constant, if $s = O\left(\log^{\frac1{d - 1}}n\right)$, then for \emph{any} algorithm there is some ``bad'' input on which the algorithm will fail with probability at least $1 / 2$. When $d=2$, this is the result obtained in \cite{chen2016statistical}. It is unclear at present whether the exponent $\frac1{d - 1}$ can be improved. 

Note that Theorem \ref{thm:lowerbound} is an information theoretical lower bound.  On the other hand, our Theorem \ref{mainthm} considers only polynomial time solvability, and there is a considerable gap between the performance guarantee of Theorem~\ref{mainthm} and the lower bound given in Theorem~\ref{thm:lowerbound}.  Closing this gap remains an open problem.

\begin{proof}[Proof of Theorem \ref{thm:lowerbound}]
	Let $m=n-s$ and $\overline{\mathcal Y}=\{ Y_0,\dots, Y_m\}$ be a subset of  $\mathcal Y$ of size $m+1$ defined as follows.  Let $Y_0$ be the incidence matrix of the partition $C_1, \ldots, C_k$, where $C_l := \{(l - 1)s + 1, \ldots, ls\}$.  We then define $Y_i$ for $i > 0$ by swapping the cluster membership of $s$ and $s + i$.  More formally, if $s + i \in C_l$, then $Y_i$ is the incidence matrix of the partition $C_1', \ldots, C_k'$, where $C_1' := C_1 \cup \{s + i\} \setminus \{s\}$, $C_l' := C_l \cup \{s\} \setminus \{s + i\}$, and $C_j' := C_j$ for all $j \neq 1, l$.
	
	Let $\mathcal P_{(Y^*,T)}$ be the joint distribution of $(Y^*,T)$ where we first sample an incidence matrix $Y^*$ from $\overline{\mathcal Y}$ uniformly at random and then sample a hypergraph adjacency tensor $T \sim H(n, \mathcal C(Y^*), p, q, d)$ (see Section~\ref{sec:hsbmintro}). Then we have 
	\begin{align}\label{eq:lowerboundprob}
	\inf_{f}\sup_{Y^*\in\mathcal Y}\mathbb P(f(T)\not=Y^*)\geq \inf_{f} \mathbb P_{(Y^*, T)}	(f(T)\not=Y^*)\geq 1-\frac{I(Y^*;T)+1}{\log |\overline{\mathcal Y}|},
	\end{align}
where the last inequality is by Fano's inequality and $I(Y^*; T)$ is the mutual information between $Y^*$ and $T$. Let $\mathbb P_i$ be the probability distribution of the hypergraph $H$ conditioned on $Y^*=Y_i$. By the convexity of KL-divergence we have 
\begin{align*}
I(Y^*;T)\leq \frac{1}{(m+1)^2}\sum_{i,i'=0}^m D(\mathbb P_i\| \mathbb P_{i'})\leq \max_{i,i'} D(\mathbb P_i \| \mathbb P_{i'}).
\end{align*}
Note that $\mathbb P_i$ is the product of $n\choose d$ many Bernoulli distributions.  Let $\mathbb P_i(e)$ be the probability distribution of the hyperedge $e$ under the distribution $\mathbb P_i$, which is either $\text{Ber}(p)$ or $\text{Ber}(q)$. Then for any $i\not=i'$  we have 
\begin{align}
	D(\mathbb P_i \| \mathbb P_{i'})&\leq \sum_{e}D(\mathbb P_i(e)\| \mathbb P_{i'}(e))\leq 3 {s - 1 \choose d-1} D(p\| q)+3 {s - 1 \choose d-1} D(q\| p) \label{eq:KL_divergence_bound}\\
	&\leq 6 {s - 1\choose{d-1}}\max\{ D(p\|q), D(q\|p)\}.\notag 
\end{align}

Here the first inequality in \eqref{eq:KL_divergence_bound} is due to the fact that KL divergence is additive for products of independent distributions.  The second inequality comes from counting terms in which $\mathbb P_i(e) \neq \mathbb P_{i'}(e)$.  In the worst case we have $s + i \in C_l$ and $s + i' \in C_{l'}$ for some $l' \neq l \neq 0$, in which we get a contribution of $D(p || q)$ from all $e$ containing $s + i$ and $d - 1$ indices from $C_1 \setminus \{s\}$, all $e$ containing $s$ and $d - 1$ indices from $C_l \setminus \{s + i\}$, and all $e$ containing $s + i'$ and $d - 1$ indices from $C_{l'} \setminus \{s + i'\}$, a total of $3{s - 1 \choose d - 1}$ terms; we get a contribution of $D(q || p)$ from the same number of terms.

%\textcolor{red}{To estimate the right hand side, we notice that  for all hyperedges that does not contain $s+i,s+i'$ or $s$, the KL-divergence is $0$. 
%For hyperedges that contain $s$, 
%For hyperedges $e$ that contains both $i$ and $i'$, if $P_i(e)$ has mean $p$, when all vertices in $e$ besides}
If \eqref{eq:KL} holds, then $I(Y^*;T)\leq \frac{1}{4} \log(n-s)=\frac{1}{4} \log |\overline{\mathcal Y}|.$ When $n\geq 128$, we have $\log |\overline{\mathcal Y}|\geq 4$. Then from \eqref{eq:lowerboundprob} the minimax error probability is at least $1/2$. This completes the proof.
\end{proof}

\appendix

\section{Simple counting algorithm}\label{sec:counting}

One can recover HSBMs by simply counting the number of hyperedges containing pairs of vertices: with high probability, pairs of vertices in the same cluster will be contained in more hyperedges than pairs in different clusters.  However, our spectral algorithm provides better performance guarantees than this simple counting algorithm.

\begin{algorithm}
\caption{}\label{alg:counting}
Given $H = (V, E)$, $|V| = n$, number of clusters $k$, and cluster size $s = n / k$:
\begin{enumerate}
\item	For each pair of vertices $u \neq v$, compute $A_{uv} :=$ number of hyperedges containing $u$ and $v$.
\item	For each vertex $v$, let $W_v$ be the set of vertices containing $v$ and the $s - 1$ vertices $u \neq v$ with highest $A_{uv}$ (breaking ties arbitrarily).  It will be shown that w.h.p.\ $W_v$ will be the cluster $C_i$ containing $v$.
\end{enumerate}
\end{algorithm}

\begin{theorem}\label{thm:counting}
Let $H$ be sampled from $H(n, d, \CC, p, q)$, where $d\geq 3,\CC = \{C_1, \ldots, C_k\}$ and $|C_i| = s = n / k$ for $i = 1, \ldots, k$.  Then Algorithm \ref{alg:counting} recovers $\CC$ with probability $\geq 1 - 1 / n$ if 
\[{s - 2 \choose d - 2}(p - q) > \sqrt{6{n - 2 \choose d - 2}\log n}.\]
\end{theorem}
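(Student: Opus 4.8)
The plan is to show that the quantity $A_{uv}$ concentrates tightly enough around its mean that, w.h.p.\ \emph{simultaneously over all pairs} $u \neq v$, every same-cluster pair has a strictly larger $A_{uv}$ than every cross-cluster pair; once this holds, Step~2 of Algorithm~\ref{alg:counting} trivially outputs the correct clusters. Recall from Section~\ref{sec:concentration} that $\expval A_{uv} = {s-2 \choose d-2}(p-q) + {n-2 \choose d-2}q$ when $u,v$ lie in the same cluster and $\expval A_{uv} = {n-2 \choose d-2}q$ otherwise, so the \emph{gap} between the two means is exactly $\Delta := {s-2 \choose d-2}(p-q)$. It therefore suffices to prove that with probability $\geq 1 - 1/n$ we have $|A_{uv} - \expval A_{uv}| < \Delta/2$ for every pair $\{u,v\}$.

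First I would fix a pair $u \neq v$ and observe that $A_{uv} = \sum_{e \ni u, v} T_e$ is a sum of independent indicator random variables, one for each of the ${n-2 \choose d-2}$ hyperedges containing both $u$ and $v$ (the $T_e$ are independent across distinct hyperedges by construction of the model). Each such $T_e$ is Bernoulli with parameter $p$ or $q$. Applying a Chernoff/Hoeffding bound for sums of independent bounded variables gives
\[
\prb\!\left(|A_{uv} - \expval A_{uv}| \geq \tfrac{\Delta}{2}\right) \leq 2\exp\!\left(-\frac{(\Delta/2)^2}{2{n-2 \choose d-2}}\right) = 2\exp\!\left(-\frac{\Delta^2}{8{n-2 \choose d-2}}\right).
\]
Then I would take a union bound over all fewer than $n^2$ pairs: the total failure probability is at most $2n^2\exp\bigl(-\Delta^2 / (8{n-2 \choose d-2})\bigr)$, and this is $\leq 1/n$ as soon as $\Delta^2 / (8{n-2 \choose d-2}) \geq \log(2n^3)$, i.e.\ $\Delta \gtrsim \sqrt{{n-2 \choose d-2}\log n}$. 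The constant $6$ in the hypothesis ${s-2 \choose d-2}(p-q) > \sqrt{6{n-2 \choose d-2}\log n}$ is exactly what one needs to absorb the $8$, the factor $2$, and the $n^2$-to-$n^3$ slack (possibly after adjusting a Hoeffding constant, since the indicators are not symmetric around $1/2$; using the Bernstein or relative-entropy Chernoff form with variance proxy $\leq {n-2 \choose d-2}/4$ gives the cleanest match).

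Finally, conditioned on the event that $|A_{uv} - \expval A_{uv}| < \Delta/2$ for all pairs, the argument is purely deterministic: for any $v$ in cluster $C_i$ and any $u \in C_i$, $A_{uv} > \expval A_{uv} - \Delta/2 = {n-2 \choose d-2}q + \Delta/2$, while for any $w \notin C_i$, $A_{wv} < {n-2\choose d-2}q + \Delta/2$. Hence the $s-1$ vertices $u \neq v$ with largest $A_{uv}$ are precisely the other $s-1$ members of $C_i$, so $W_v = C_i$; ranging over all $v$ recovers $\CC$. I do not anticipate a genuine obstacle here — the only point requiring mild care is choosing the right concentration inequality so that the constant comes out as $6$ rather than something larger, and handling the fact that $\expval A_{uv}$ itself contains the common term ${n-2\choose d-2}q$, which cancels and so never needs to be estimated.
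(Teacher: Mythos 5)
Your proposal is essentially the same as the paper's proof: fix a pair $u \neq v$, note $A_{uv}$ is a sum of ${n-2 \choose d-2}$ independent Bernoulli variables, apply Hoeffding, union-bound over ${n\choose 2}$ pairs, and conclude deterministically that Step~2 separates same-cluster from cross-cluster pairs. The one substantive slip is in the stated Hoeffding bound: you wrote $2\exp\bigl(-\tfrac{(\Delta/2)^2}{2m}\bigr)$ with $m = {n-2\choose d-2}$, but the standard Hoeffding exponent for a sum of $m$ independent $[0,1]$-valued variables deviating by $t$ is $-2t^2/m$, not $-t^2/(2m)$; with your weaker constant the requirement would become $\Delta \gtrsim \sqrt{24\,m\log n}$ rather than $\sqrt{6\,m\log n}$, so the constant $6$ would not be recovered. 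With the correct exponent $-2(\Delta/2)^2/m = -\Delta^2/(2m)$ the condition $\Delta > \sqrt{6m\log n}$ yields per-pair failure probability $\le 2n^{-3}$, and the union bound over fewer than $n^2/2$ pairs gives total failure $\le 1/n$, matching the paper exactly. You already flag this possibility in your aside about ``adjusting a Hoeffding constant'' and correctly identify the variance proxy $m/4$ as the fix, so once that is made precise the argument is complete and identical in substance to the paper's (the paper simply phrases the threshold as an explicit one-sided deviation $\sqrt{\tfrac32 m\log n}$ on each side rather than as $\Delta/2$).
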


A simple counting algorithm for graph SBMs was given in \cite{chen2016statistical}. Our algorithm is modified from \cite{chen2016statistical} for hypergraphs  based on counting hyperedges and it requires $d\geq 3$.
\begin{proof}
For each $u \neq v$, $A_{uv}=\displaystyle  \sum_{e: u, v\in e}T_e$ is the sum of ${n - 2 \choose d - 2}$ independent Bernoulli random variables of expectation either $p$ or $q$.  Thus, it follows from a straightforward application of Hoeffding's inequality that
\begin{equation}\label{eq:countinglb}
A_{uv} \geq {n - 2 \choose d - 2}q + {s - 2 \choose d - 2}(p - q) - \sqrt{\frac32{n - 2 \choose d - 2}\log n}
\end{equation}
with probability $\geq 1 - 1 / n^3$ if $u$ and $v$ are in the same cluster and 
\begin{equation}\label{eq:countingub}
A_{uv} \leq {n - 2 \choose d - 2}q + \sqrt{\frac32{n - 2 \choose d - 2}\log n}
\end{equation}
with probability $\geq 1 - 1 / n^3$ if $u$ and $v$ are in different clusters.  Taking a union bound over all ${n \choose 2}$ pairs, these bounds hold for all pairs $u \neq v$ with probability $\geq 1 - 1 / n$.  Thus, as long as the lower bound in~\eqref{eq:countinglb} is greater than the upper bound in~\eqref{eq:countingub}, for each $v$ the $s - 1$ vertices with highest $A_{uv}$ will be the other vertices in $v$'s cluster.

\end{proof}

In particular, if we bound the binomial coefficient ${a \choose b}$ by $\left(\frac ab\right)^b \leq {a \choose b} \leq \left(\frac{ae}b\right)^b$, we see that
\[s \geq c_1\sqrt{nd}\left(\frac{\sqrt{\log n}}{p - q}\right)^{\frac1{d - 2}}\]
and
\[p - q \geq \frac{c_2(2end)^{\frac{d - 2}2}\sqrt{\log n}}{s^{d - 2}} = c_2\left(\frac{2ek^2d}{n}\right)^{\frac{d - 2}2}\sqrt{\log n}\]
are both sufficient conditions for recovery, where $c_1$ and $c_2$ are absolute constants.

\section{The sparse case}\label{sec:sparse}

We can also analyze the performance of Algorithm~\ref{alg} in the sparse case, in which we treat $k,d$ as fixed and try to make $p$ and $q$ as small as possible.  Our concentration bound \eqref{concentration} is not optimal in the sparse case. However, when $p=\displaystyle \frac{\omega(\log^4 n)}{n^{d-1}}$, we can still get a good concentration inequality of the adjacency matrix $A$ using Lemma 5 in~\cite{lu2012loose}. We include it here:
\begin{lemma}
	If $\displaystyle p=\frac{\omega(\log^4n)}{n^{d-1}},$ we have 
	\begin{align}\label{sparsecon}
	\|A-\mathbb EA\|_2\leq 2d\sqrt{n^{d-1}p} 	
	\end{align}
with probability $1-o(1)$.
\end{lemma}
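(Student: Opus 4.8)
The plan is to reduce to (the proof of) Lemma~5 in~\cite{lu2012loose}, which controls the spectral norm of the centered adjacency matrix of a random $d$-uniform hypergraph; the matrix $A$ of~\eqref{adj} is exactly such an adjacency matrix. First I would record the decomposition
\[
A = \sum_{e} T_e\, B_e, \qquad A - \expval A = \sum_{e}(T_e - \expval T_e)\, B_e,
\]
where $e$ ranges over all $d$-subsets of $[n]$ and $B_e$ is the adjacency matrix of the $d$-clique on the vertices of $e$ (so $(B_e)_{ij} = 1$ iff $i \neq j$ and $i, j \in e$). Thus $A - \expval A$ is a sum of \emph{independent}, centered, symmetric matrices, each of operator norm $d - 1$ and with $\operatorname{Var}(T_e) = \expval T_e(1 - \expval T_e) \le p$. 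The governing scale is the maximal expected degree $\max_i \sum_j \expval A_{ij} = \Theta(n^{d-1}p)$, which is $\omega(\log^4 n)$ by hypothesis: this is exactly the density regime in which the Kahn--Szemer\'edi $\varepsilon$-net argument yields a bound of optimal order $\sqrt{n^{d-1}p}$, as opposed to the Hoeffding bound of Theorem~\ref{concentration}, which ignores $p$ and is lossy once $p \to 0$.

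Next I would run that argument following~\cite{lu2012loose}. Fix a $\tfrac12$-net $\mathcal N$ of $\mathbb S^{n-1}$ with $|\mathcal N| \le 5^n$, so that $\|A - \expval A\|_2 \le 2\sup_{\x \in \mathcal N}|\langle (A - \expval A)\x, \x\rangle|$. For fixed $\x = (x_1, \dots, x_n)$, write
\[
\langle (A - \expval A)\x, \x\rangle = 2\sum_{e}(T_e - \expval T_e)\,c_e(\x), \qquad c_e(\x) := \sum_{i < j,\; i,j \in e} x_i x_j,
\]
a sum of independent terms, and split the hyperedges into those with $|c_e(\x)|$ below a threshold of order $1/n$ --- whose contribution is controlled by a Bernstein estimate (the threshold bounding the range, the variance proxy being $\sum_e c_e(\x)^2\operatorname{Var}(T_e) \lesssim n^{d-2}p$ as in~\eqref{cauchy}) --- and the remaining ``heavy'' hyperedges, whose contribution is bounded deterministically on the event that $H$ is discrepant, i.e.\ that for all vertex subsets $S, T$ the number of hyperedges meeting both is close to its expectation. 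A union bound over $\mathcal N$ and over the pairs $(S, T)$ then gives $|\langle (A - \expval A)\x, \x\rangle| \le d\sqrt{n^{d-1}p}$ for all $\x \in \mathcal N$ with probability $1 - o(1)$, hence $\|A - \expval A\|_2 \le 2d\sqrt{n^{d-1}p}$.

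The one point of care is that~\cite{lu2012loose} works with a single edge probability, whereas here $\expval T_e \in \{p, q\}$ with $q < p$. Since the argument uses only independence of the $\{T_e\}$, the variance bound $\operatorname{Var}(T_e) \le p$, and \emph{upper} bounds on expected edge counts between vertex sets, all of which persist with $p$ in place of the common probability (because $\expval T_e \le p$ for every $e$, so $H$ is dominated edgewise by $G^{(d)}(n,p)$), the proof carries over verbatim. I expect the main obstacle to be exactly this heavy-edge/discrepancy step --- verifying that the edge-counting concentration required by the Kahn--Szemer\'edi scheme holds in the $\omega(\log^4 n)$-degree regime and survives the two-probability structure --- while the rest is routine bookkeeping to produce the stated constant $2d$.
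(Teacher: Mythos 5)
Your sketch takes the same route as the paper, which proves this lemma simply by invoking Lemma~5 of Lu and Peng~\cite{lu2012loose}: the decomposition $A - \expval A = \sum_e(T_e - \expval T_e)B_e$, the $\tfrac12$-net with $|\mathcal N|\le 5^n$, and the Kahn--Szemer\'edi light/heavy split with a discrepancy bound for the heavy part are exactly the ingredients inside that cited lemma. You go somewhat further than the paper by explicitly flagging, and justifying via the edgewise stochastic domination $\expval T_e \le p$, the extension from the homogeneous $G^{(d)}(n,p)$ setting of~\cite{lu2012loose} to the two-probability HSBM --- a point the paper passes over without comment.
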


In this case, we get the following analog of Theorem \eqref{thm:projdiff}.

\begin{lemma}
	
Assume~\eqref{sparsecon} holds.  Then
\[\|P_k(A) - P_k(\expval A)\|_2 \leq \varepsilon\]
and
\[\|P_k(A) - P_k(\expval A)\|_\frob \leq \sqrt{2k}\varepsilon\]
for any \begin{align}\label{epsilonLow}
 \epsilon \geq \frac{2d\sqrt{n^{d-1}p}}{{s - 2 \choose d - 2}(p - q)s - 4d\sqrt{n^{d-1}p} }.\end{align}
\end{lemma}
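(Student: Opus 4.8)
The plan is to mimic the proof of Theorem~\ref{thm:projdiff} verbatim, substituting the sparse concentration bound \eqref{sparsecon} for the dense bound \eqref{eq:concentration}. Concretely, I would apply Lemma~\ref{lem:col18} with $X = A$ and $Y = \expval A$, choosing the spectral cutoffs
\[
\beta = -{s-2 \choose d-2}(p-q) - {n-2 \choose d-2}q + 2d\sqrt{n^{d-1}p}, \qquad
\alpha = {s-2 \choose d-2}(p-q)(s-1) - {n-2 \choose d-2}q - 2d\sqrt{n^{d-1}p}.
\]
Wait — as in the dense case I actually want the \emph{top} $k$ eigenvalues bounded below and the bottom $n-k$ bounded above, so I will relabel: set
\[
\beta := \lambda_k(\expval A) - 2d\sqrt{n^{d-1}p}, \qquad \alpha := \lambda_{k+1}(\expval A) + 2d\sqrt{n^{d-1}p},
\]
using the eigenvalues computed in Lemma~\ref{lemma:EAeigs}. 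Under \eqref{sparsecon}, Weyl's inequality gives $\lambda_i(A) \geq \beta$ for $i \leq k$ and $\lambda_i(A) \leq \alpha$ for $i > k$, and of course the same holds for $\expval A$ with room to spare, so the hypotheses of Lemma~\ref{lem:col18} are met provided $\alpha < \beta$.

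The spectral gap is $\beta - \alpha = \lambda_k(\expval A) - \lambda_{k+1}(\expval A) - 4d\sqrt{n^{d-1}p}$. From Lemma~\ref{lemma:EAeigs}, $\lambda_k(\expval A) - \lambda_{k+1}(\expval A) = {s-2 \choose d-2}(p-q)(s-1) + {s-2 \choose d-2}(p-q) = {s-2 \choose d-2}(p-q)s$, so
\[
\beta - \alpha = {s-2 \choose d-2}(p-q)s - 4d\sqrt{n^{d-1}p}.
\]
Lemma~\ref{lem:col18} then yields $\|P_k(A) - P_k(\expval A)\|_2 \leq \|A - \expval A\|_2 / (\beta - \alpha) \leq 2d\sqrt{n^{d-1}p}\,/\,(\beta-\alpha)$ and the analogous Frobenius bound with an extra $\sqrt{2k}$ factor. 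This last ratio is $\leq \varepsilon$ exactly when $\varepsilon \geq 2d\sqrt{n^{d-1}p}\,/\,\bigl({s-2 \choose d-2}(p-q)s - 4d\sqrt{n^{d-1}p}\bigr)$, which is precisely \eqref{epsilonLow}, completing the proof.

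\textbf{Main obstacle.} There is essentially no hard step here — the argument is a direct transcription. The only thing to be careful about is that the denominator $\beta - \alpha$ must be positive, i.e.\ we implicitly need ${s-2 \choose d-2}(p-q)s > 4d\sqrt{n^{d-1}p}$; but this is automatically guaranteed whenever a valid $\varepsilon$ satisfying \eqref{epsilonLow} exists (the right-hand side of \eqref{epsilonLow} is positive and finite exactly in that regime), so no separate hypothesis is needed. One should also double-check that the regime $p = \omega(\log^4 n)/n^{d-1}$ assumed in \eqref{sparsecon} is consistent with the gap condition, but that is a matter of recording the assumption rather than proving anything. I would present the proof in three lines: invoke Lemma~\ref{lem:col18}, plug in the eigenvalue values from Lemma~\ref{lemma:EAeigs} and the bound \eqref{sparsecon}, and read off the condition on $\varepsilon$.
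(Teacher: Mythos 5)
Your proof is correct and follows exactly the same route as the paper: apply Lemma~\ref{lem:col18} with $X = A$, $Y = \expval A$, use Weyl's inequality together with the sparse concentration bound \eqref{sparsecon} to locate the eigenvalues, and read off the spectral gap $\beta - \alpha = {s-2 \choose d-2}(p-q)s - 4d\sqrt{n^{d-1}p}$ from Lemma~\ref{lemma:EAeigs}. As a small bonus, you caught a labeling slip that appears in the paper's own proofs (both in Theorem~\ref{thm:projdiff} and in the sparse lemma), where the roles of $\alpha$ and $\beta$ are swapped relative to the statement of Lemma~\ref{lem:col18}; your relabeling $\beta := \lambda_k(\expval A) - 2d\sqrt{n^{d-1}p}$, $\alpha := \lambda_{k+1}(\expval A) + 2d\sqrt{n^{d-1}p}$ is the consistent choice, and of course the final gap formula is the same either way.
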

\begin{proof}
Apply Lemma \ref{lem:col18} with $X = A$, $Y = \expval A,$ and 
\[\alpha = {{s-2}\choose{d-2}}(p-q)(s-1)-{{n-2}\choose{d-2}}q - 2d\sqrt{n^{d-1}p},\]
\[\beta = -{{s-2}\choose{d-2}}(p-q)-{{n-2}\choose{d-2}}q + 2d\sqrt{n^{d-1}p}.\]
Note that in order for this to work we need 
\begin{align}\label{sparsePQ}
{s - 2 \choose d - 2}(p - q)s > 4d\sqrt{n^{d-1}p}.	
\end{align}
\end{proof}

If we assume $\displaystyle p=\frac{\omega(\log^4n)}{n^{d-1}}, p-q=\Theta(p)$, and $k$ is fixed, condition \eqref{sparsePQ} always holds.
%Arguing similarly as in Section~\ref{sec:perfguarantees} above, from \eqref{sparsePQ}, we get
%\[p - q \geq Cd^{\frac{d + 2}4}\left(\frac{\sqrt{2e}k}{\sqrt n}\right)^{\frac{d - 1}2}\]
%for an absolute constant $C$.
In addition, we want the failure probability to be $o(1)$, so we require 
\[\exp\left(-\epsilon^2{s - 1 \choose d - 1}\right) = o((nk)^{-1}).\]
Putting
$\displaystyle \epsilon^2{s - 1 \choose d - 1} \geq 3\log  n$
suffices to accomplish this. Therefore, we require that
$\displaystyle
\epsilon\geq \frac{c_4\sqrt{\log n}}{n^{(d-1)/2}}	
$ for some constant $c_4$ depending only on $d,k$ as an additional lower bound on $\epsilon$.
 On the other hand, to make the algorithm succeed, we need to have $\displaystyle \epsilon<\frac{p-q}{32d}$ from the analysis in Section \ref{sec:delandrec}. Together we have the following constraint on $\epsilon$:
 \begin{align}\label{sparseMax}
 &\max\left\{\frac{c_4\sqrt{\log n}}{n^{(d-1)/2}},\frac{2d\sqrt{n^{d-1}p}}{{s - 2 \choose d - 2}(p - q)s - 4d\sqrt{n^{d-1}p} }\right\}<\epsilon<\frac{p-q}{32d}.
 \end{align}
To make \eqref{sparseMax} work, assuming $\displaystyle p-q>c_5p$ for some constant $0<c_5<1$, we have 
$$\displaystyle 
 	p-q\geq \frac{c_6}{n^{(d-1)/3}}
$$
for some constant $c_6>0$ depending on $d,k$ and $c_5$. This yields the following corollary to Theorem~\ref{mainthm}:

%\begin{align}
%	\frac{p-q}{32d}>\frac{2d\sqrt{n^{d-1}p}}{{s - 2 \choose d - 2}(p - q)s - 4d\sqrt{n^{d-1}p} }
%\end{align}

%
%If we lower bound $\epsilon$ by
%\[\epsilon \geq \frac{6d\sqrt{d{n \choose d - 1}}}{{s - 2 \choose d - 2}(p - q)s}\]
%and apply the bounds on the binomial coefficients again, we see that
%\[s \leq \frac{cn}{((p - q)^2\ln n)^{\frac1{d - 1}}}\]
%is sufficient, where $c$ is an absolute constant.
%

\begin{theorem}[Sparse case] Let $k,d$ be constant and 
let $H$ be sampled from $H(n, d, \CC, p, q)$, where $\CC = \{C_1, \ldots, C_k\}$ and $|C_i| = s = n / k$ for $i = 1, \ldots, k$.  If $p-q>c_5p$ for some constant $0<c_5<1$ and 
\begin{align}
 	p-q\geq \frac{c_6}{n^{(d-1)/3}}
 \end{align}
 for some constant $c_6$ depending on $d,k$ and $c_5$, 
then Algorithm~\ref{alg} recovers $\CC$ w.h.p.
\end{theorem}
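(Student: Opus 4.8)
The plan is to derive the corollary purely by tracing the constraint chain~\eqref{sparseMax} and optimizing the hidden parameters, all the machinery (the sparse concentration bound~\eqref{sparsecon}, the projector perturbation lemma, the hyperedge-counting lemmas, and the ``delete and repeat'' argument of Section~\ref{sec:delandrec}) already being in place for constant $k,d$. First I would observe that once $k$ and $d$ are fixed, the only quantities that vary with $n$ are $p,q$, so the entire analysis of Algorithm~\ref{alg} from the earlier sections goes through verbatim provided a single $\epsilon$ can be found satisfying~\eqref{sparseMax}. Thus the whole proof reduces to: \emph{show that the hypotheses $p-q>c_5p$ and $p-q\geq c_6 n^{-(d-1)/3}$ imply that the interval in~\eqref{sparseMax} is nonempty for $n$ large}, together with checking the side condition $p=\omega(\log^4 n/n^{d-1})$ needed for~\eqref{sparsecon}.

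Next I would estimate the middle term of the max in~\eqref{sparseMax}. Using ${s-2\choose d-2}=\Theta(n^{d-2})$ and $s=\Theta(n)$ for fixed $k,d$, the denominator ${s-2\choose d-2}(p-q)s-4d\sqrt{n^{d-1}p}$ is $\Theta(n^{d-1}(p-q))$ once we know $n^{d-1}(p-q)\gg \sqrt{n^{d-1}p}$, i.e. $n^{d-1}(p-q)^2\gg p$; under $p-q>c_5p$ this is $n^{d-1}p\gg 1$, which holds since $p=\omega(\log^4 n/n^{d-1})$. Hence the middle term is $\Theta\!\big(\sqrt{n^{d-1}p}/(n^{d-1}(p-q))\big)=\Theta\!\big(1/\sqrt{n^{d-1}(p-q)}\big)$ up to the $p$-vs-$(p-q)$ comparability, which is again controlled by $p-q>c_5p$. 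Comparing this with the target upper bound $(p-q)/(32d)$, nonemptiness of the interval (ignoring the $\sqrt{\log n}$ term for a moment) amounts to $\sqrt{n^{d-1}(p-q)}\cdot(p-q)\gg 1$, i.e. $n^{(d-1)/2}(p-q)^{3/2}\gg 1$, i.e. $(p-q)\gg n^{-(d-1)/3}$ — which is precisely the hypothesis with an appropriately chosen constant $c_6$. Then I would handle the $c_4\sqrt{\log n}\,n^{-(d-1)/2}$ term: since $(p-q)/(32d)\geq (c_6/32d)\,n^{-(d-1)/3}$ and $n^{-(d-1)/3}\gg \sqrt{\log n}\,n^{-(d-1)/2}$, this term is dominated by the upper bound for $n$ large, so it does not affect nonemptiness. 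Choosing $\epsilon$ to be, say, twice the max in~\eqref{sparseMax} then satisfies all constraints, and Theorem~\ref{mainthm}'s conclusion (with failure probability $o(1)$ by the choice $\epsilon^2{s-1\choose d-1}\geq 3\log n$) gives recovery w.h.p.

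The main obstacle, and the only place real care is needed, is bookkeeping the constants: $c_6$ must be chosen large enough (in terms of $d$, $k$, $c_5$, and the absolute constant $c_4$) that \emph{simultaneously} (i) the denominator in the middle term of~\eqref{sparseMax} is positive and comparable to $n^{d-1}(p-q)$ — i.e.~condition~\eqref{sparsePQ} holds — and (ii) the resulting lower bound on $\epsilon$ is strictly below $(p-q)/(32d)$ with room to spare. Both reductions rely on converting $p\asymp p-q$ (from $p-q>c_5p$) so that every ``$p$'' appearing under a square root can be replaced by a constant multiple of ``$p-q$''; one must be careful that $c_5<1$ only gives $p-q<p$ in one direction, so the comparability $p\leq c_5^{-1}(p-q)$ is the inequality actually used. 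I would also note explicitly that $p-q\geq c_6 n^{-(d-1)/3}$ together with $p\asymp p-q$ forces $p=\Omega(n^{-(d-1)/3})=\omega(\log^4 n/n^{d-1})$, so the hypothesis of the sparse concentration lemma is automatically met and need not be assumed separately. With these constant choices fixed, the interval in~\eqref{sparseMax} is nonempty for all sufficiently large $n$, and the theorem follows.
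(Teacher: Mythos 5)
Your proposal is correct and takes essentially the same route as the paper: both reduce the theorem to verifying that the constraint interval in~\eqref{sparseMax} is nonempty under the stated hypotheses, using $p-q>c_5p$ to convert between $p$ and $p-q$, estimating the middle term as $\Theta\bigl(1/\sqrt{n^{d-1}(p-q)}\bigr)$, comparing against $(p-q)/(32d)$ to extract the exponent $-(d-1)/3$, and observing that the hypothesis automatically implies $p=\omega(\log^4 n/n^{d-1})$ so that the sparse concentration bound applies. Your version is somewhat more explicit about the constant bookkeeping than the paper's terse derivation, but the underlying argument is the same.
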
 
%
%
%In particular, if $d$ and $k$ are constant and $p = \Theta(q)$, then the algorithm works when $p$ and $q$ are in the regime between $n^{-\frac{d + 2}{4}}$ and $(\log n)^{-1/2}$.

Thus, we see that our algorithm is far from optimal in the sparse case: the algorithms developed in~\cite{kim2018stochastic,lin2017fundamental,chien2018minimax,chien2018community,ghoshdastidar2017consistency} all provide better performance guarantees.  In fact, even the trivial hyperedge counting algorithm (Algorithm~\ref{alg:counting}) beats our spectral algorithm in the sparse case.

\section*{Acknowledgements} The authors would  thank MSRI 2018 Summer School: Representations of High Dimensional Data, during which this project was initiated. The authors are grateful to anonymous referees for their detailed comments and suggestions, which have improved the quality of this
 paper.  S.C.\ is supported by a PIMS Postdoctoral Fellowship. Y.Z.\ is  supported by NSF DMS-1712630.

 \bibliographystyle{plain}
\bibliography{globalref.bib}

\end{document}